\newtheorem{theorem}{Theorem}[section]
\newtheorem{lemma}[theorem]{Lemma}
\definecolor{darkergreen}{rgb}{0.0, 0.5, 0.0}
\begin{document}

\begin{frontmatter}

\title{SupReMix: Supervised Contrastive Learning for Medical Imaging Regression with Mixup}

\makeatletter
\def\@fnsymbol#1{}  
\makeatother

\author[csc,tmr]{Yilei Wu\textsuperscript{*}}
\author[csc,tmr,ece]{Zijian Dong\textsuperscript{*}}
\author[duke]{Chongyao Chen}
\author[eth]{Wangchunshu Zhou}
\author[csc,tmr,ece]{Juan Helen Zhou\textsuperscript{†}}

\affiliation[csc]{organization={Centre for Sleep and Cognition \& Centre for Translational Magnetic Resonance Research, Yong Loo Lin School of Medicine, National University of Singapore},
           country={Singapore}}

\affiliation[tmr]{organization={Healthy Longevity \& Human Potential Translational Research Program and Department of Medicine, Yong Loo Lin School of Medicine, National University of Singapore},
           country={Singapore}}

\affiliation[ece]{organization={Department of Electrical and Computer Engineering, National University of Singapore},
           country={Singapore}}

\affiliation[duke]{organization={Department of Mathematics, Duke University},
            city={Durham},
            state={NC},
            country={USA}}

\affiliation[eth]{organization={Department of Computer Science, ETH Zurich},
           city={Zurich},
           country={Switzerland}}

\makeatletter
\renewcommand\@makefntext[1]{\noindent#1}  
\makeatother

\fntext[fn1]{* Equal contribution}
\fntext[fn2]{† Corresponding author}

\begin{abstract}
In medical image analysis, regression plays a critical role in computer-aided diagnosis. It enables quantitative measurements such as age prediction from structural imaging, cardiac function quantification, and molecular measurement from PET scans. While deep learning has shown promise for these tasks, most approaches focus solely on optimizing regression loss or model architecture, neglecting the quality of learned feature representations which are crucial for robust clinical predictions. Directly applying representation learning techniques designed for classification to regression often results in fragmented representations in the latent space, yielding sub-optimal performance. In this paper, we argue that the potential of contrastive learning for medical image regression has been overshadowed due to the neglect of two crucial aspects: \emph{ordinality-awareness} and \emph{hardness}. To address these challenges, we propose \textbf{Sup}ervised Contrastive Learning for Medical Imaging \textbf{Re}gression with \textbf{Mi}xup (\textbf{SupReMix}). It takes anchor-inclusive mixtures (mixup of the anchor and a distinct negative sample) as hard negative pairs and anchor-exclusive mixtures (mixup of two distinct negative samples) as hard positive pairs at the embedding level. This strategy formulates harder contrastive pairs by integrating richer ordinal information. Through theoretical analysis and extensive experiments on six datasets spanning MRI, X-ray, ultrasound, and PET modalities, we demonstrate that SupReMix fosters continuous ordered representations, significantly improving regression performance.
\end{abstract}






\begin{keyword}

Medical imaging regression \sep contrastive learning \sep mixup \sep MRI \sep X-ray \sep ultrasound \sep PET \sep brain age \sep bone age \sep ejection fraction \sep amyloid SUVR



\end{keyword}

\end{frontmatter}



\section{Introduction}

Regression problems aim to predict continuous values based on given input data. They encompass a broad range of medical imaging applications such as predicting brain age using MRI \cite{peng2021accurate,dong2024brain}, assessing pediatric development through bone age prediction from hand X-rays \cite{halabi2019rsna,escobar2019hand}, analyzing cardiac function by measuring left-ventricular ejection fraction (LEVF) from echocardiograms \cite{ouyang2020video}, and evaluating amyloid accumulation in brain PET scans by prediction of standardized uptake value ratios (SUVR) \cite{pemberton2022quantification}. \emph{Vanilla deep regression} refers to the approach of training a deep model to estimate the target value, with the distance (\emph{e.g.}, L1 distance \cite{hsieh2021automated}) between the prediction and ground-truth defined as the loss function.

However, there has been limited research dedicated to developing representation learning methods specifically tailored for regression tasks. Learning robust feature representations, beyond just optimizing regression loss, ensures reliable generalization across different clinical scenarios \cite{chen2019self}. In the realm of classification, techniques such as supervised contrastive learning (SupCon) \citep{khosla2020supervised} have achieved significant success in enhancing representation accuracy. One might consider directly adapting SupCon for regression tasks. However, such direct application of SupCon tends to neglect the inherent ordinal nature of regression, \emph{i.e.}, lack of \emph{ordinality-awareness} (Figure \ref{fig:ordinality}). Furthermore, prior work in classification underscores the significance of hard contrastive pairs in contrastive learning \citep{ho2020contrastive, robinson2020contrastive, kalantidis2020hard, wu2023synthetic}. Nevertheless, they mainly focused on hard negatives with hard positives underexplored, and the \emph{hardness} of contrastive pairs in contrastive learning for regression remains inadequately examined (Figure \ref{fig:logit_figure}). Data mixing techniques \citep{zhang2018mixup,verma2019manifold, shen2022mix} have been used to create hard samples in previous contrastive learning methods for classification \citep{kalantidis2020hard,lee2020mix, liu2023harnessing}. However, these methods do not leverage the label distance to differentiate the hardness among hard negative mixtures.

There have been some notable attempts to address regression problems with constrastive learning. Rank-N-Contrast (RNC) proposes a ranking-based contrastive learning method for regression tasks \cite{zha2023rank}. Adaptive Contrast for Image Regression (AdaCon) introduces an adaptive loss function to preserve label relationships in the latent space \cite{dai2021adaptive}. However, these methods are limited by their reliance on data augmentation, making them not applicable to domains without well-established augmentation techniques, such as time series data.

In this paper, we propose \textbf{Sup}ervised Contrastive Learning for Medical Imaging \textbf{Re}gression with \textbf{Mix}up (\textbf{SupReMix}), a novel framework for regression representation learning. Our aim is to better leverage the inherent ordinal relationships among various inputs and foster the generation of ``harder" contrastive pairs. Instead of relying on real samples in conventional contrastive learning, the proposed SupReMix approach constructs new contrastive pairs at the embedding level in an \emph{anchor-inclusive} and \emph{anchor-exclusive} manner. We take \emph{anchor-inclusive} mixtures as hard negatives: mixing the anchor with a distinct negative sample, thus pulling the negatives closer to the convex hull between the anchor and negatives in order to encourage continuity. On the other hand, we take \emph{anchor-exclusive} mixtures as hard positives: merging two negative samples, the convex combination of whose labels equals to that of the anchor, to encourage local linearity. Moreover, we assign weights to the negative pairs to incorporate label distance information. Our theoretical analysis offers a robust foundation, demonstrating that SupReMix is capable of formulating continuous ordered representations. For the remainder of the paper, we will refer to our hard negatives and hard positives as ``Mix-neg" and ``Mix-pos", respectively.

We validate our approach through extensive experiments on six medical imaging datasets spanning diverse modalities (Figure \ref{fig:enter-label}): brain age prediction from both structural and functional MRI, bone age assessment from X-rays, cardiac function estimation from echocardiograms, and amyloid burden quantification from PET scans. By comparing with other representation learning methods, we demonstrate that SupReMix consistently outperforms existing approaches across all modalities. For example, on the RSNA bone age dataset \citep{halabi2019rsna}, SupReMix achieves state-of-the-art performance with a Mean Absolute Error (MAE) of 4.08 months, significantly improving upon the baseline's 6.79 months. Through both qualitative and quantitative analysis, we validate that SupReMix learns continuous, ordered representations that better capture the inherent structure of regression tasks, leading to more robust and accurate predictions in medical imaging analysis. Furthermore, we show that SupReMix exhibits strong generalization capabilities when handling challenging scenarios such as missing targets and few-shot cases, and can serve as an effective pre-training strategy to enhance existing task-specific architectures.

\section{Related work}

\subsection{Representation learning}

Contrastive learning has emerged as a powerful strategy in self-supervised representation learning, demonstrating improved performance through the alignment of positive pairs and the repulsion of negative pairs in a representation space \citep{chen2020simple, he2020momentum, chen2020improved}. Its supervised variant, termed supervised contrastive learning (SupCon), has been devised as a generalization of triplet \citep{weinberger2009distance} and N-pair losses \citep{sohn2016improved}, wherein pairs of samples from identical classes are considered positive pairs, and those from different classes are seen as negative pairs \citep{khosla2020supervised}. Recently, there have been various adaptations of contrastive learning to continuous labels, with each bringing distinct perspectives to the table. \cite{dufumier2021conditional} leveraged contrastive loss adjusted by continuous metadata for classification, while \cite{schneider2023learnable} utilized a refined contrastive loss to encode behavioral and neural data through interpretable embeddings derived from continuous or temporal labels. Notably, unlike our approach, these studies do not explore regression problems. \cite{yu2021group} devised an action quality assessment model using score regression between two videos, bypassing the usual contrastive learning framework. \cite{wang2022contrastive} added a contrastive loss term to the L1 loss to aid gaze estimation domain adaptation, improving cross-domain performance but reducing source dataset efficiency. In contrast, our method enhances the performance of the source dataset while at the same time facilitating domain adaptation. \cite{zha2023rank} proposed Rank-N-Contrast (RNC), an improved contrastive loss defining positive and negative pairs in a relative way, which refines continuous representations for regression. Adaptive Contrast for Image Regression (AdaCon) preserves
label relationships in the latent space through an adaptive loss function \cite{dai2021adaptive}. However, their reliance on input data augmentation limits their utility in areas devoid of robust augmentation methods, such as time series data. Contrarily, our approach remains applicable to those domains without necessitating augmentation on the input level.

\subsection{Hard contrastive pairs}

Prior research in classification emphasizes the crucial role of hard contrastive pairs in contrastive learning. It generally falls into two categories: hard sample mining and hard sample generation. The former aims to identify the most challenging samples from existing ones, with notable work by \cite{robinson2020contrastive} devising an importance-based sampling strategy for mining hard negatives without computational overhead. In the realm of hard sample generation, \cite{ho2020contrastive} utilized adversarial attacks to augment the training dataset, incorporating pixel-level disturbances into clean samples. \cite{kalantidis2020hard} proposed hard negative mixing strategies at the feature level. \cite{wu2023synthetic} developed a data generation framework enhancing contrastive learning through combined hard sample creation and contrastive learning. These approaches markedly diverge from ours, mainly as they revolve around self-supervised contrastive learning without utilizing label information, and target classification rather than regression. Previous contrastive learning methods for classification have utilized data mixing techniques \citep{zhang2018mixup,verma2019manifold,shen2022mix} to generate hard samples \citep{kalantidis2020hard,lee2020mix,liu2023harnessing}. However, these methods are unable to use label distance to differentiate hardness among hard negative mixtures in regression. C-Mixup \citep{yao2022c} selects samples in regression for mixup based on the label distance. Anchor data augmentation (ADA) \cite{schneider2023anchor} proposes a data augmentation method in nonlinear over-parameterized regression. However, these methods lack a clear definition of the relationship between outcomes (positive or negative) which makes it difficult to define contrastive pairs and explicitly integrate label distance information into the contrastive learning process.

\subsection{Regression in medical imaging}
Medical image regression has emerged as a critical tool in quantitative healthcare, enabling precise measurements across diverse clinical applications. Brain age prediction from structural MRI serves as a biomarker for neurological health, helping identify accelerated aging patterns that may indicate early disease onset \citep{cole2017predicting,smith2019estimation,franke2019ten}. In pediatrics, bone age assessment from hand radiographs aids in evaluating developmental disorders and growth abnormalities \citep{halabi2019rsna,escobar2019hand}. Cardiac function quantification through metrics like ejection fraction from echocardiograms provides essential diagnostic information for heart conditions \citep{ouyang2020video,muhtaseb2022echocotr,mokhtari2022echognn}. In neurology, amyloid PET quantification through standardized uptake value ratios (SUVR) enables early detection and monitoring of Alzheimer's disease (AD) progression \citep{pemberton2022quantification}.

Despite sharing fundamental challenges as regression tasks, these applications have traditionally been approached in isolation, with methods developed specifically for each task separately. For instance, brain age prediction methods often focus on brain-specific architectures \citep{peng2021accurate}, while cardiac function estimation employs specialized temporal modeling \citep{ouyang2020video}. This fragmented approach overlooks the common underlying challenge of learning meaningful representations from medical images for regression tasks. Through extensive experiments across six medical regression tasks, we demonstrate the potential benefits of improving representation learning in medical image regression generally.

\begin{figure}[htbp]
    \centering
    \includegraphics[width=\columnwidth]{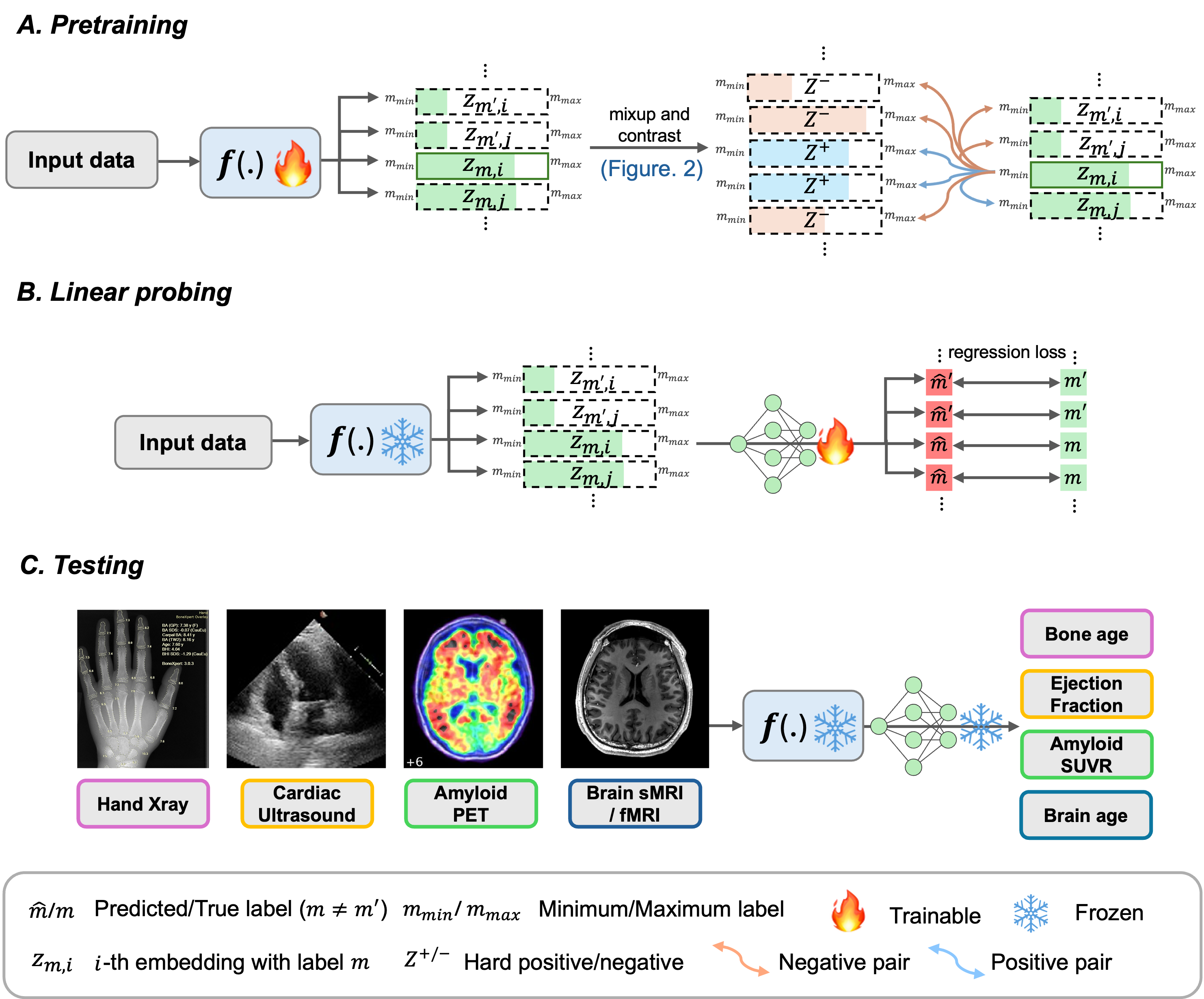}
    \caption{\textbf{Overview of SupReMix framework.} In pretraining (Panel A), the model is trained to learn task-specific representations (\(z_{m,i}\)) through mixup and contrast. In linear probing (Panel B), a linear regressor is trained to predict outcomes such as bone age, ejection fraction, amyloid SUVR, and brain age based on the learned representations. Example input modalities include hand X-ray, cardiac ultrasound, amyloid PET, and brain MRI (Panel C). SupReMix is designed to generalize across diverse medical imaging regression tasks.
    }
    \label{fig:enter-label}
\end{figure}

\section{Method}
\label{sec4}

In a regression task, our goal is to train a neural network that consists of two main components: an encoder $f(\cdot)$: $X \mapsto \mathbf{z} \in\mathbb{R}^{d_e}$ which encodes inputs to embeddings, and a predictor $p(\cdot)$: $\mathbf{z} \in\mathbb{R}^{d_e} \mapsto m\in\mathbb{R}$ which outputs the target value $m \in \mathbb{R}$. Given one mini-batch, hard contrastive pairs are first created utilizing the mixup technique. Following this, we calculate our supervised contrastive regression loss, denoted as $\mathcal{L}_{\text{SupReMix}}$, based on both the real and our hard contrastive pairs. To predict the target value, $f(\cdot)$ is followed by $p(\cdot)$, trained by a regression loss (\emph{e.g.}, L1 loss).

In this section, we outline our approach to supervised contrastive regression. We begin in Section~\ref{sec4.1} with an explanation of our mixup strategy for generating hard negative and positive pairs. This is followed by Section~\ref{sec4.2}, which is about our weights defined for contrastive pairs. This introduces \emph{distance magnifying (DM)}, a behavior that is greatly advantageous in supervised contrastive regression, differentiating it from classification. In Section~\ref{sec4.3}, we bring together the preceding elements to formulate our supervised contrastive regression loss, $\mathcal{L}_{\text{SupReMix}}$. In Section~\ref{sec4.4}, we present a theoretical analysis of the distance magnifying property of our weights for negative pairs, as well as the ordinality-awareness of $\mathcal{L}_{\text{SupReMix}}$.

\textbf{Notations.} Let $I$: the set of embeddings from real samples with $N:=|I|$ (i.e. mini-batch size), $M \subset \mathbb{R}$: the set of all initial labels, ${\rho} :I \mapsto M$: the function mapping an embedding to its label. Order $I$ such that $\rho$ is monotone. $I_{m} \subset I$: the set of embeddings with ${\rho}=m$ and $k_{m}:=|I_{m}|$, which we take to be 0, if $m\notin M$. We give an order for elements in 
$I_{m}$, $(m,i)$, meaning the $i$-th embeddings in $I_{m}$.

\begin{figure}[htbp]
\centerline{\includegraphics[width=\columnwidth]{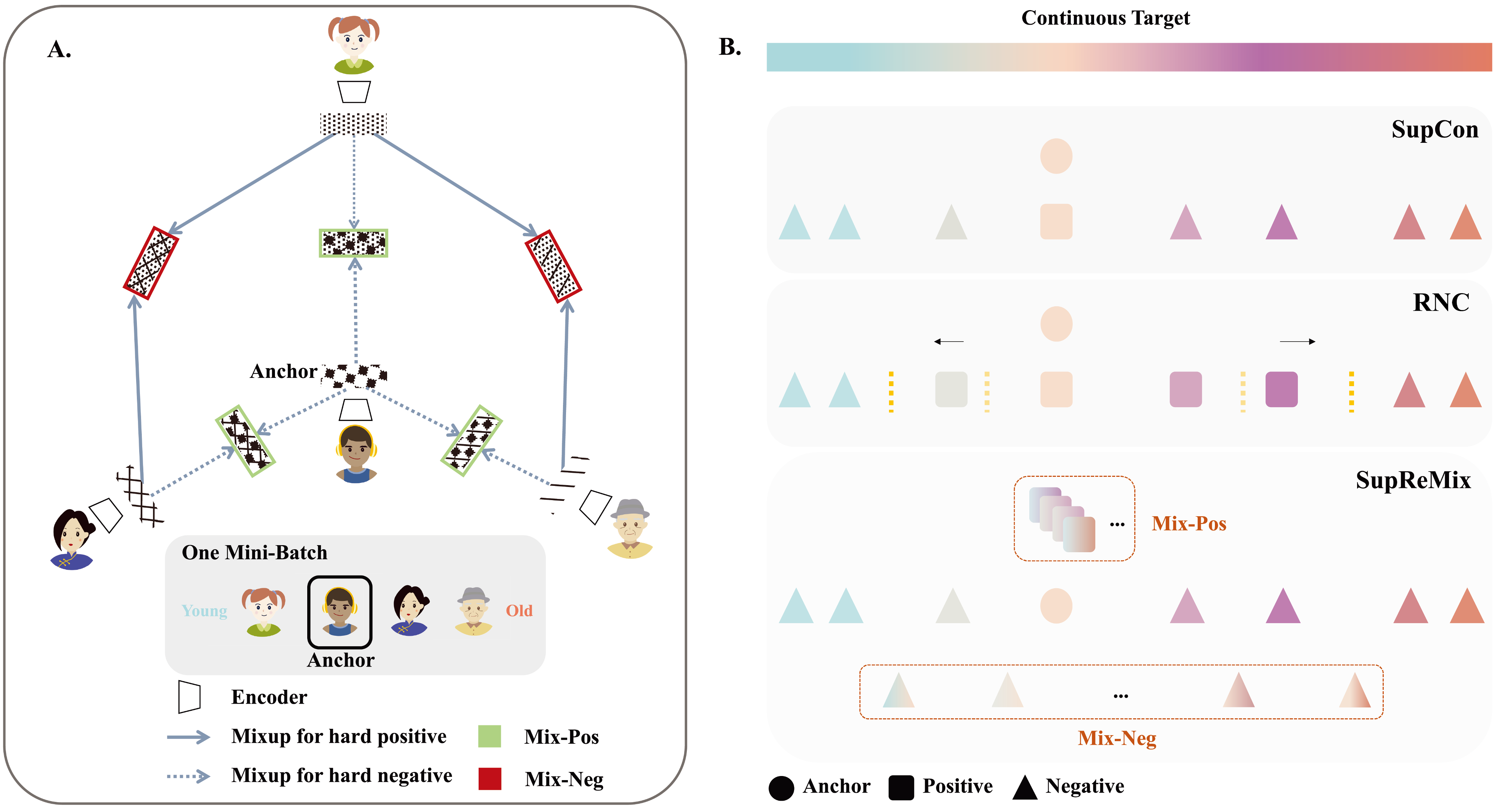}}
\caption{\textbf{Schematic overview of SupReMix method, and comparison with SupCon and RNC.} \textbf{A.} An encoder first encodes inputs to embeddings. Given an anchor, Mix-neg are obtained through mixups ($\lambda_1 \sim \text{Beta}(\alpha, \beta)$) of the anchor itself and a negative in the latent space. Meanwhile, Mix-pos are derived from mixups ($\lambda_2$ is deterministic for two mixup embeddings) of two negative embeddings, the convex combination of whose labels equals to the anchor. \textbf{B.} SupCon identifies samples with the same label as positives and those with different labels as negatives, whereas RNC determines positives and negatives through a relative approach. SupReMix further refines this process by introducing hard positives and negatives alongside the conventional real ones. SupReMix holds a key advantage over RNC: it does not require input augmentation, which can be difficult when dealing with modalities such as time series.}
\label{fig3}
\end{figure}

\subsection{Mixup for hard contrastive pairs}
\label{sec4.1}

\textbf{\emph{Anchor-inclusive} mixtures are hard negatives.} Given an anchor, a ``mixed" negative—created through the convex combination of the anchor itself and a real negative—can be more challenging to differentiate compared to a real negative. This occurs in the latent space where the mixed negative is pulled closer to the anchor, thereby diminishing the discernible differences between the anchor and the hard negative. Given an anchor $\mathbf{z} \color{black}_{m,i} \in I_m$, we generate a set of Mix-neg $\mathbf{z}^{-}_{m,i}$ defined by:

\begin{align}
    \mathbf{z}^{-}_{m,i} &= \lambda_1 \cdot \mathbf{z}_{m,i} + (1-\lambda_1) \cdot \mathbf{z}'
\end{align}

\begin{equation}
    k^{-}_{m,i}:=N-k_{m}
\end{equation}

where $\mathbf{z}' \in I\backslash I_m, \ \lambda_1 \sim \text{Beta}(\alpha, \beta)$, and $\overline{m} = \lambda_1 \cdot m + (1-\lambda_1) \cdot m'$. $k^{-}_{m,i}$ is the number of Mix-neg generated for the anchor $z_{m,i}$, $\overline{m}$ is the label of Mix-neg, which is the convex combination of anchor's label $m$ and a distinct negative's label $m'$.

Different from vanilla mixup \citep{zhang2018mixup}, we use $\lambda_1$ as a ``control" of hardness in our Mix-neg, modulating it through the adjustment of $\alpha$ and $\beta$ parameters that shape the Beta distribution from which $\lambda_1$ is sampled (Figure \ref{fig3}). If we choose $\alpha$ and $\beta$ to produce a skewed distribution where a majority of the values cluster close to one, the anchor $\mathbf{z}_{m,i}$ will almost surely predominate over the real negative $\mathbf{z}'$, thereby generating a harder negative. Conversely, if we choose $\alpha$ and $\beta$ so that $\lambda_1$ is drawn from a distribution that leans heavily towards zero, the anchor $\mathbf{z}_{m,i}$ has a relatively smaller share in the mixup, resulting in a reduction of the mixup hardness.

\textbf{\emph{Anchor-exclusive} mixtures are hard positives.} In contrastive learning, a common practice is to create positive pairs by augmenting an anchor to generate different views \citep{chen2020simple}; despite their visual differences, these augmented input retain their core identity, as they originate from a single input. However, relying solely on this method can limit the richness of the learned representations, as it overlooks the potential value of incorporating other similar objects that offer additional valuable perspectives \citep{wu2023synthetic}. Furthermore, it is not applicable for domains with no appropriate data augmentation methods such as time series. Finally, this approach does not take into account the underlying ordinal relationships among inputs and labels, which is particularly important in regression tasks. 

To address these limitations, we mix two negative embeddings with labels above and below the anchor to serve as hard positives (Figure \ref{fig3}). This strategy not only preserves the natural order of the data but also expands the diversity of the positive pairs, creating a more stringent constraint that guides the learning process to a locally linear embedding space. Given an anchor $\mathbf{z}_{m,i}$, and a window size $\gamma \in \mathbb{Z}^{+}$, we create Mix-pos $\mathbf{z}^{+}_{m,i}$:

\begin{equation}
    \mathbf{z}^{+}_{m,i} = \lambda_2 \cdot \mathbf{z}_{m',i'} + (1-\lambda_2) \cdot \mathbf{z}_{m'',i''}
\end{equation}

\begin{equation}
    k_{m,i}^{+}:=\sum_{j=1}^\gamma k_{i-j}\cdot \sum_{l=1}^\gamma k_{i+l}
\end{equation}

where $i'<i<i'', \ i-i', i''-i \leq \gamma$, and $\lambda_2 \cdot m' + (1-\lambda_2) \cdot m'' = m$. $k_{m,i}^{+}$ is the number of Mix-pos for the anchor $\mathbf{z}_{m,i}$. $0<\lambda_2<1$ is deterministic for each Mix-pos.

\subsection{Distance magnifying}
\label{sec4.2}

In the loss function derived from contrastive pairs, we incorporate a vital parameter - the label distance information for each negative pair (including both real and mixup instances). Leveraging label distance as a metric facilitates a more fundamental approach to handling negative pairs. For each negative pair $\mathbf{z}_{m}$ and $\mathbf{z}_{\overline{m}}$ ($m\neq\overline{m}$), we define a weight $w_{m,\overline{m}}$ as:
\begin{equation}
    w_{m,\overline{m}}=\frac{1+|m-\overline{m}|}{m_{\text{max}}-m_{\text{min}}}
\end{equation}
where $m_{\text{max}}$ and $m_{\text{min}}$ are the maximum and minimum of regression label respectively. (Note that in the implementation, all contrastive pairs will be multiplied by $w$ in the loss calculation. Adding 1 to the numerator ensures that $w$ remains non-zero and the same for all positive pairs). In our loss function (Section~\ref{sec4.3}), the logit corresponding to each contrastive pair is modulated by multiplication with $w$. 

In addition to explicitly encoding label distance information for negative pairs, another critical effect of this weight is to facilitate \emph{distance magnifying (DM)}, a characteristic we argue is highly beneficial in supervised contrastive regression, distinguishing it from classification. Analogously, for a student taking an exam, it is fundamental to answer the easier questions correctly to secure a high score, and basic knowledge harnessed to tackle simpler questions lays the groundwork to address more complex ones. We assert that a similar principle should govern contrastive learning in regression tasks. For example, in the context of brain age prediction from MRI, it is implausible for a model to accurately differentiate brains between ages 30 and 40 if it cannot discern the more pronounced differences between ages 30 and 80.

In Theorem~\ref{th1}, we establish a theoretical analysis that our devised weighting scheme for negative pairs accentuates the influence of the larger label distance. This essentially means that we increase the penalty for negative pairs that are farther apart as compared to those that are closer.

\subsection{Definition of loss function}
\label{sec4.3}

\textbf{Extended Notations.} $\overline{I}$: the new set of training embeddings after mixup ($I \subset \overline{I}$). $\overline{M}\subset \mathbb{R}$: the new set of labels ($M \subset \overline{M}$). Given an anchor $\mathbf{z}_{m,i}$, $I_{(m,i),\overline{m}}$: the set of the anchor's contrastive embeddings with $\rho=\overline{m} \in \overline{M}$, $k_{(m,i),\overline{m}}:=|I_{(m,i),\overline{m}}|$. All the embeddings are normalized to norm 1 in the latent space. Then we formulate our label-wise loss function as:

\begin{equation}
    \mathcal{L}_{\text{SupReMix}} = \sum_{m\in M} \frac{-1}{k_{m}}\sum_{i=1}^{k_m}\sum_{\substack{j=1\\j\neq i}}^{k_{(m,i),m}} \log\frac{e^{\mathbf{z}^T_{m,i}\cdot\mathbf{z}_{m,j}/{\tau}}}{\sum_{\overline{m}\in \overline{M}}\sum_{l=1}^{\prime k_{(m,i),\overline{m}}}w_{m,\overline{m}}\cdot e^{\mathbf{z}^T_{m,i} \cdot\mathbf{z}_{\overline{m},l}/\tau}},
    \label{eq:main_equation}
\end{equation}
where $\sum'$ is the summation running over all $I_{(m,i),\overline m}$ except for $(m,i) = (\overline m,l)$.

In Lemma~\ref{le1}, we establish a lower bound, denoted as \(\mathcal{L}^*\), for \(\mathcal{L}_{\text{SupReMix}}\). Subsequently, in Theorem~\ref{th2}, we demonstrate that this lower bound \(\mathcal{L}^*\) represents the infimum. Furthermore, we conclude that \(\mathcal{L}^*\) can be closed if, and only if, the embeddings adhere to a globally ordered arrangement in accordance to their labels and maintain a locally linear behavior.

\subsection{Theoretical analysis}
\label{sec4.4}

\begin{theorem}[Distance Magnifying]   
\label{th1}
    Given any two negative pairs (real or mixture), $s^{m,m'}_{i,j} := \mathbf{z}^T_{m,i}\cdot  \mathbf{z}_{m',j}$, $s^{m,m''}_{i,l} :=\mathbf{z}^T_{m,i} \cdot  \mathbf{z}_{m'',l}$, where $m\neq m' \neq m''$, $|m'-m|>|m''-m|$, we always have $\nabla_{1}=\frac{\partial \mathcal{L}}{\partial s^{m,m'}_{i,j}}>0$, $\nabla_{2}=\frac{\partial \mathcal{L}}{\partial s^{m,m''}_{i,l}}>0$, $\left.\frac{\nabla_{1}}{\nabla_{2}}\right|_{\mathrm{{with}} \ w}>\left.\frac{\nabla_{1}}{\nabla_{2}}\right|_{\mathrm{{without}} \ w}$ for $\mathcal{L}_{\text{SupReMix}}$. 
\end{theorem}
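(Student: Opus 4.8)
The plan is to compute the two partial derivatives of $\mathcal{L}_{\text{SupReMix}}$ in closed form and then compare them. Fix the anchor $\mathbf{z}_{m,i}$ and abbreviate its weighted denominator as $D_{m,i}:=\sum_{\overline{m}\in\overline{M}}\sum'_{l}w_{m,\overline{m}}\,e^{\mathbf{z}^{T}_{m,i}\mathbf{z}_{\overline{m},l}/\tau}$. Splitting the logarithm in \eqref{eq:main_equation} rewrites the loss in the equivalent additive form $\mathcal{L}_{\text{SupReMix}}=\sum_{m\in M}\sum_{i=1}^{k_m}\bigl(C_{m,i}\log D_{m,i}-\tau^{-1}k_m^{-1}\sum_{j\neq i}\mathbf{z}^{T}_{m,i}\mathbf{z}_{m,j}\bigr)$, with strictly positive constants $C_{m,i}$ (the number of label-$m$ positives of $\mathbf{z}_{m,i}$, divided by $k_m$). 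The structural observation that drives everything is that the similarity $s^{m,m'}_{i,j}$ of a negative pair enters $\mathcal{L}_{\text{SupReMix}}$ through exactly one summand, the term $w_{m,m'}e^{s^{m,m'}_{i,j}/\tau}$ inside $\log D_{m,i}$; the quadratic sum over positives does not involve it.

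\textbf{Differentiation and reduction to the weights.} Differentiating gives $\nabla_1=\dfrac{\partial\mathcal{L}_{\text{SupReMix}}}{\partial s^{m,m'}_{i,j}}=C_{m,i}\,\dfrac{w_{m,m'}\,e^{s^{m,m'}_{i,j}/\tau}}{\tau\,D_{m,i}}$, and the identical computation with $(m',j)$ replaced by $(m'',l)$ gives $\nabla_2=C_{m,i}\,\dfrac{w_{m,m''}\,e^{s^{m,m''}_{i,l}/\tau}}{\tau\,D_{m,i}}$. Every factor — $C_{m,i}$, the weights, $\tau$, $D_{m,i}$, and the exponentials — is strictly positive, so $\nabla_1>0$ and $\nabla_2>0$. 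Since both negatives are attached to the same anchor, the common factor $C_{m,i}/(\tau D_{m,i})$ cancels in the quotient, leaving $\left.\dfrac{\nabla_1}{\nabla_2}\right|_{\mathrm{with}\ w}=\dfrac{w_{m,m'}}{w_{m,m''}}\,e^{(s^{m,m'}_{i,j}-s^{m,m''}_{i,l})/\tau}$. Repeating the computation for the unweighted loss (all weights set to $1$) produces the same cancellation, so $\left.\dfrac{\nabla_1}{\nabla_2}\right|_{\mathrm{without}\ w}=e^{(s^{m,m'}_{i,j}-s^{m,m''}_{i,l})/\tau}$. Thus the theorem reduces to the single inequality $w_{m,m'}>w_{m,m''}$.

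\textbf{Monotonicity of the weights.} Substituting $w_{m,\overline{m}}=\bigl(1+|m-\overline{m}|\bigr)/(m_{\max}-m_{\min})$ and using the hypothesis $|m'-m|>|m''-m|$, we get $1+|m'-m|>1+|m''-m|>0$; dividing by the common positive constant $m_{\max}-m_{\min}$ yields $w_{m,m'}>w_{m,m''}$. Together with the previous step this establishes $\left.\frac{\nabla_1}{\nabla_2}\right|_{\mathrm{with}\ w}>\left.\frac{\nabla_1}{\nabla_2}\right|_{\mathrm{without}\ w}$.

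\textbf{Where the care is needed.} The only delicate point is bookkeeping of which scalar is differentiated: because $\mathbf{z}^{T}_{m,i}\mathbf{z}_{m',j}=\mathbf{z}^{T}_{m',j}\mathbf{z}_{m,i}$, when $\mathbf{z}_{m',j}$ is itself a real anchor the same inner product reappears inside $D_{m',j}$ and adds a further positive term to $\nabla_1$. This never threatens the sign claims. For the ratio claim it suffices to read $s^{m,m'}_{i,j}$ as the logit carried by the anchor $\mathbf{z}_{m,i}$ — automatic for Mix-neg, since mixtures are never anchors, and already encoded by the notation $s^{m,m'}_{i,j}$ — or, if one insists on differentiating the symmetrized loss, to note that the extra term again carries the factor $w_{m,m'}$ so the weighted ratio still dominates. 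Beyond this, the proof is a single differentiation of a log-sum-exp plus the elementary monotonicity of $w$, and I expect no genuine obstacle.
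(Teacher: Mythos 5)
Your proposal is correct and follows essentially the same route as the paper's proof: differentiate the log-sum-exp to get $\nabla_{1},\nabla_{2}$ proportional to $b_{m,i}\,w\,e^{s/\tau}/(\tau C^{w}_{m,i})$, cancel the common anchor-dependent factor in the ratio, and reduce the claim to the elementary monotonicity $w_{m,m'}>w_{m,m''}$ (the paper phrases the with/without-$w$ comparison via a parameter $t\in\{0,1\}$ in the weight, which is equivalent to your ``set all weights to $1$'' convention since constants cancel in the ratio). Your remark about the same logit reappearing in the denominator of the other anchor is a point the paper's proof silently ignores, and your resolution of it is sound.
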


\begin{lemma}[Lower bound]
\label{le1}
    $\mathcal{L}_{\text{SupReMix}}$ has a lower bound $\mathcal{L}^*$.
\end{lemma}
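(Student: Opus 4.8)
The plan is to bound the loss one anchor at a time, reducing the whole statement to a single elementary inequality on each denominator. Fix an anchor $\mathbf{z}_{m,i}$ and write $p_{m,i}:=k_{(m,i),m}-1$ for its number of positive partners, i.e. the contrastive embeddings carrying the anchor's own label $m$ other than the anchor itself (these comprise the real same-label embeddings together with the Mix-pos). Denote by $D_{m,i}=\sum_{\overline m\in\overline M}\sum_{l}' w_{m,\overline m}\,e^{\mathbf{z}^T_{m,i}\mathbf{z}_{\overline m,l}/\tau}$ the denominator appearing in every log-ratio for this anchor. Expanding each summand of $\mathcal{L}_{\text{SupReMix}}$ as $\log D_{m,i}-\mathbf{z}^T_{m,i}\mathbf{z}_{m,j}/\tau$ and collecting, the anchor contributes $C_{m,i}=p_{m,i}\log D_{m,i}-\tfrac1\tau\sum_{j}\mathbf{z}^T_{m,i}\mathbf{z}_{m,j}$ (sum over its $p_{m,i}$ positives $j$), and $\mathcal{L}_{\text{SupReMix}}=\sum_{m\in M}\tfrac1{k_m}\sum_{i=1}^{k_m}C_{m,i}$.

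First I would isolate inside $D_{m,i}$ the block $\overline m=m$. Because the primed sum excludes only the single index $(\overline m,l)=(m,i)$, this block equals exactly $w_{m,m}\sum_{j}e^{\mathbf{z}^T_{m,i}\mathbf{z}_{m,j}/\tau}$, with the sum running over the same $p_{m,i}$ positives that appear in the numerators, while all other blocks ($\overline m\neq m$) add nonnegative terms; hence $D_{m,i}\ge w_{m,m}\sum_{j}e^{\mathbf{z}^T_{m,i}\mathbf{z}_{m,j}/\tau}$. Next I would apply the AM--GM inequality to these $p_{m,i}$ exponentials, giving $\sum_{j}e^{\mathbf{z}^T_{m,i}\mathbf{z}_{m,j}/\tau}\ge p_{m,i}\exp\!\big(\tfrac{1}{p_{m,i}\tau}\sum_{j}\mathbf{z}^T_{m,i}\mathbf{z}_{m,j}\big)$, so $\log D_{m,i}\ge\log(w_{m,m}p_{m,i})+\tfrac{1}{p_{m,i}\tau}\sum_{j}\mathbf{z}^T_{m,i}\mathbf{z}_{m,j}$. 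Multiplying by $p_{m,i}>0$ and substituting into $C_{m,i}$, the two temperature-dependent linear sums cancel and leave $C_{m,i}\ge p_{m,i}\log(w_{m,m}p_{m,i})$. Since $w_{m,m}=1/(m_{\text{max}}-m_{\text{min}})$, summing over all anchors yields $\mathcal{L}_{\text{SupReMix}}\ge\mathcal{L}^*:=\sum_{m\in M}\frac{1}{k_m}\sum_{i=1}^{k_m}(k_{(m,i),m}-1)\log\frac{k_{(m,i),m}-1}{m_{\text{max}}-m_{\text{min}}}$, a finite constant determined only by the batch composition. The degenerate cases $p_{m,i}\in\{0,1\}$ are absorbed by the conventions $0\log 0=0$ and $\log 1=0$, both consistent with $C_{m,i}\ge 0$.

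The argument is elementary, so there is no deep obstacle; the two points requiring care are (i) confirming that the primed denominator sum retains every same-label positive except the anchor itself — this is precisely what makes the $\overline m=m$ block reproduce the numerator exponentials and lets AM--GM annihilate the $\tau$-dependence — and (ii) tracking the weight $w_{m,m}$, the common positive constant attached to all positive pairs, since without it the naive ``ratio $\le 1$'' bound would be false. I would also stress that AM--GM is used deliberately rather than the cheaper move of discarding all but one denominator term: equality in AM--GM forces all of an anchor's positive similarities to coincide, which is exactly the locally linear / globally ordered configuration that Theorem~\ref{th2} singles out, so the constant $\mathcal{L}^*$ produced here is the very quantity shown there to be the infimum.
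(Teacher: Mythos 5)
Your proposal is correct and follows essentially the same route as the paper's proof: both discard the negative-pair terms in the denominator (keeping only the $\overline m=m$ block with weight $w_{m,m}=1/(m_{\text{max}}-m_{\text{min}})$) and then apply a convexity inequality — your AM--GM on the exponentials is just Jensen's inequality for $-\log$ in the form the paper uses — arriving at the identical constant $\mathcal{L}^*$. The only nit is your closing remark that the degenerate cases are ``consistent with $C_{m,i}\ge 0$'': the per-anchor bound $p_{m,i}\log\bigl(w_{m,m}p_{m,i}\bigr)$ can be negative when $m_{\text{max}}-m_{\text{min}}>p_{m,i}$, but this does not affect the validity of the lower bound.
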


\begin{theorem}[Infimum]
\label{th2}
    The lower bound $\mathcal{L}^*$ is the infimum of $\mathcal{L}_{\text{SupReMix}}$. $\mathcal{L}_{\text{SupReMix}}$ is closed to its infimum if, and only if, the followings are true:
    \begin{enumerate}
    \addtolength\itemsep{-0.3mm}
     \item all the real samples with the same label $m$ are embedded close to some vector $\mathbf{z}_m$;
     \item all Mix-pos of an anchor $(m,i)$ have embeddings close to $\mathbf{z}_m$;
     \item all the negatives (real and Mix-neg) of an anchor $(m,i)$ have $\mathbf{z}_{m,i}\cdot \mathbf{z}_{m',j}$ that are not equal to $1$.
 \end{enumerate}

 which means that the embeddings are \textbf{globally ordered and locally linear}.
\end{theorem}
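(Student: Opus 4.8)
The plan is to exploit the additive, per-anchor structure of $\mathcal{L}_{\text{SupReMix}}$ and to trace precisely which term-wise relaxations produce the bound $\mathcal{L}^*$ of Lemma~\ref{le1}; equality in those relaxations is exactly what the three conditions encode. Write each summand as $T_{(m,i),j} = -\tfrac1\tau\,\mathbf{z}^T_{m,i}\mathbf{z}_{m,j} + \log D_{(m,i)}$, where the denominator $D_{(m,i)}$ splits into a \emph{positive block} ($\overline m = m$, common weight $c := 1/(m_{\max}-m_{\min})$, containing all same-label reals and all Mix-pos of the anchor) and a \emph{negative block} ($\overline m \neq m$, weights $w_{m,\overline m} > c$, containing the real negatives and the Mix-neg). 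From the proof of Lemma~\ref{le1}, $\mathcal{L}^*$ arises by (R1) bounding each numerator similarity by its maximum $1$; (R2) a convexity / log-sum-exp (Jensen / AM--GM) estimate on the positive block, which, since the positive-partner index set appearing in the numerator and in the positive block coincide, makes the per-anchor contribution collapse; and (R3) replacing each negative block by its minimum over the configurations that the sphere and the Mix-neg construction actually permit. Each relaxation leaves a nonnegative slack, so $\mathcal{L}_{\text{SupReMix}} - \mathcal{L}^* = \sum_{(m,i),j}\big(\text{slack}_{R_1} + \text{slack}_{R_2} + \text{slack}_{R_3}\big) \ge 0$.

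Next I would show $\mathcal{L}^*$ is the infimum by building a minimizing sequence of configurations. Arrange the class representatives $\mathbf{z}_m$ along a monotone curve on the unit sphere, collapse each $I_m$ onto its $\mathbf{z}_m$, and choose the curve to be locally geodesic at the window scale $\gamma$: then, because $\lambda_2 m' + (1-\lambda_2)m'' = m$ by construction of the Mix-pos, the normalized convex combination $\lambda_2 \mathbf{z}_{m',i'} + (1-\lambda_2)\mathbf{z}_{m'',i''}$ lands arbitrarily close to $\mathbf{z}_m$, so $R_1$ and $R_2$ become exact; at the same time let the curve's large-scale shape spread far-apart labels into (near-)mutually-extremal directions, using additional dimensions of $\mathbb{R}^{d_e}$ when available, so that $R_3$ becomes exact up to the residue forced by the Mix-neg construction. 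Evaluating $\lim \mathcal{L}_{\text{SupReMix}}$ along such a sequence and matching it to $\mathcal{L}^*$ shows $\inf \mathcal{L}_{\text{SupReMix}} \le \mathcal{L}^*$; combined with Lemma~\ref{le1} this gives $\inf \mathcal{L}_{\text{SupReMix}} = \mathcal{L}^*$, i.e.\ $\mathcal{L}^*$ is the infimum.

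Then the ``if and only if.'' For $(\Leftarrow)$, any configuration satisfying conditions 1--3 is (a limit point of) the family above, so its loss approaches $\mathcal{L}^*$. For $(\Rightarrow)$, suppose $\mathcal{L}_{\text{SupReMix}} \le \mathcal{L}^* + \varepsilon$; since every slack is nonnegative, each slack is $\le \varepsilon$. Smallness of $\text{slack}_{R_1}$ forces $\mathbf{z}^T_{m,i}\mathbf{z}_{m,j} \ge 1 - O(\varepsilon)$ for every positive pair, hence all real samples with label $m$ lie near a common vector $\mathbf{z}_m$ (condition 1); and since each Mix-pos of $(m,i)$ carries label $m$ and is therefore a positive partner of $(m,i)$, it too lies near $\mathbf{z}_m$ (condition 2). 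Smallness of $\text{slack}_{R_2}$ is the corresponding convexity-equality statement and is consistent with (1)--(2). Smallness of $\text{slack}_{R_3}$ forces each negative block to be near-minimal, so in particular no negative pair $\mathbf{z}_{m,i}, \mathbf{z}_{m',j}$ can have similarity near $1$ (condition 3). Hence the embeddings are globally ordered and locally linear.

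I expect the achievability construction in the second step to be the crux. One must produce genuine unit-vector configurations in which the Mix-pos and Mix-neg are \emph{generated from} the chosen embeddings --- not placed freely --- and still drive the loss to $\mathcal{L}^*$, and there is a real tension to resolve: low curvature of the curve makes every Mix-pos land on $\mathbf{z}_m$ (good for $R_1$, $R_2$) but keeps far-apart labels close (bad for $R_3$), while high curvature does the reverse; reconciling these --- plausibly by a two-scale limit that is flat within each $\gamma$-window but spreads across windows, which needs $d_e$ large enough --- is the delicate point, as is checking that the Mix-pos local-linearity constraints are jointly realizable. A secondary difficulty is making the converse quantitative: turning ``$\sum$ slacks $\le \varepsilon$'' into explicit $O(\varepsilon)$-closeness of conditions 1--3 requires the relaxations $R_1$--$R_3$ to be stable (e.g.\ strong convexity of log-sum-exp), and care near the boundary of the sphere, where the map sending a real-negative inner product to the induced Mix-neg similarity degenerates --- which is precisely why condition 3 reads ``$\ne 1$'' rather than ``$= -1$.''
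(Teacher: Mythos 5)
Your skeleton (reuse the relaxations behind Lemma~\ref{le1}, build a minimizing sequence, read the equality conditions off the slacks) matches the paper's, and you correctly identify the achievability construction as the crux --- but the tension you flag there is real, and your proposed resolution cannot work, because \emph{no} geometric configuration at fixed temperature drives the loss to $\mathcal{L}^*$. The denominator of every term always retains the negative block, and each negative summand is bounded below by $w_{m,\overline m}\,e^{-1/\tau}>0$ on the unit sphere; hence at fixed $\tau$ the per-term gap to $\mathcal{L}^*$ is at least $\log\bigl(1+c\,e^{-2/\tau}\bigr)>0$ no matter how you spread far-apart labels into extremal directions or how many dimensions $d_e$ you spend. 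The paper closes the gap by a mechanism you do not invoke: it takes an essentially trivial configuration --- all unnormalized embeddings collinear in a single $2$-plane with positions proportional to their labels, so every Mix-pos lands exactly on $\mathbf{z}_m$ and all distinct embeddings are separated by some fixed angle $\theta_0>0$ --- and then sends the temperature to an extreme so that the residual $\log\bigl(1+C\exp((\cos\theta_0-1)/\tau)\bigr)$ vanishes. The claim ``$\mathcal{L}^*$ is the infimum'' is therefore a statement about the joint limit over embeddings \emph{and} $\tau$, not over embeddings at fixed $\tau$; your two-scale curve cannot substitute for the temperature limit.

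Two further omissions. First, the Mix-neg are random ($\lambda_1\sim\mathrm{Beta}(\alpha,\beta)$), so with positive probability a Mix-neg of $(m,i)$ lies at an arbitrarily small angle to its anchor, defeating any deterministic control of your $R_3$ slack; the paper chooses per-anchor angles $\theta_{m,i}$ so the bad event has probability below $\delta/N$ and proves the bound only with probability $1-\delta$, a qualification your argument has no place for. Second, your decomposition of Lemma~\ref{le1}'s bound is not the one the paper uses: there is no ``numerator similarity $\le 1$'' step ($R_1$); the bound comes from dropping the negative block and a single Jensen step on the positive block, whose equality condition is that all positive-pair similarities of a given anchor are \emph{equal} (which, aggregated over anchors, yields conditions 1--2), not that they equal $1$. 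The ``only if'' direction you sketch would then need to be rerouted through the Jensen equality condition and the angular-separation condition, as the paper does. As written, the achievability half of your proof does not go through.
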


\begin{proof}
    Refer to ~\ref{proof}.
\end{proof}

In Theorem \ref{th1}, the key insight is that SupReMix amplifies the penalty for more distant negative pairs compared to closer ones by assigning specific weights to these pairs. Meanwhile, Lemma \ref{le1} and Theorem \ref{th2} collectively highlight that the SupReMix loss function not only possesses a tight bound, but also ensures both the ordinality and continuity of representations as it converges towards the infimum. In ~\ref{proof} we further show that our globally ordered and locally linear representations could lead to a better generalization bound of the regressor with lower Rademacher Complexity \cite{bartlett2002rademacher}.

\section{Experiments}


\subsection{Datasets and experimental setup}

To examine SupReMix's effectiveness across different imaging modalities and regression tasks, we conducted experiments on six different medical imaging datasets spanning brain age prediction (UK Biobank \citep{miller2016multimodal}, HCP-Lifespan \citep{tisdall2012volumetric}), bone age assessment (RSNA-BAA \citep{halabi2019rsna}, RHPE-BAA \citep{escobar2019hand}), cardiac function estimation (Echo-Net \citep{ouyang2020video}), and amyloid burden quantification (A4 \citep{sapra2009anti}). These datasets represent a broad spectrum of medical imaging modalities including structural and functional MRI, X-rays, echocardiogram videos, and PET scans. The key characteristics of each dataset are summarized in Table \ref{tab:datasets}, and we describe each dataset in details below. Additional details about the datasets, including preprocessing steps, are provided in \ref{B1}.

\begin{table}[htbp]
\centering
\resizebox{\textwidth}{!}{
\setlength{\tabcolsep}{4pt}
\begin{tabular}{@{}llllcccc@{}}
\toprule
Dataset & Task & Modality & \#Train & \#Val & \#Test \\
\midrule
UK Biobank & Brain Age & T1 MRI (3D)  & 19509 & 2431 & 2431  \\
HCP-Lifespan & Brain Age & Rs-fMRI (time series) & 456 & 100 & 100 \\
RSNA & Paediatric Bone Age & X-ray (2D) & 11611 & 1000 & 200  \\
RHPE & Paediatric Bone Age & X-ray (2D) & 5496 & 716 & 80  \\
Echo-Net & Ejection Fraction & Echocardiogram (video)  & 7465 & 1288 & 1277  \\
A4 & Amyloid SUVR & PET (3D)  & 3486 & 500 & 500  \\
\bottomrule
\end{tabular}
}
\caption{Dataset characteristics and splits.}
\label{tab:datasets}
\end{table}

\subsubsection{UK Biobank}
UK Biobank \citep{miller2016multimodal} is a large-scale biomedical database and research resource containing in-depth genetic and health information from half a million UK participants. From this rich database, we utilize the T1-weighted structural MRI scans collected from 24,371 participants (aged 42-82 years) to evaluate brain age prediction. The original non-skull-stripped T1-weighted images with 1×1×1 mm³ resolution were resampled to 2×2×2 mm³ and center cropped into 100 x 100 x 100 to reduce computational overhead. Detailed preprocessing steps are described in \cite{ALFAROALMAGRO2018400}.

\subsubsection{HCP-Lifespan}
Human Connectome Project (HCP) Lifespan \citep{tisdall2012volumetric} dataset is part of a comprehensive initiative to map human brain connectivity across the adult lifespan, collecting data from participants aged 36-100 years. From this dataset, we utilize resting-state functional MRI (rs-fMRI) data from 656 participants, acquired using the standard HCP protocol at 2mm isotropic resolution. The fMRI data were preprocessed following the pipeline described in \cite{li2019global}, which includes global signal regression to enhance behavioral associations, and parcellated using the Schaefer-400 atlas \cite{schaefer2018local}. Consequently,
our input data is in the shape of [400 (\# parcells), 478 (\# time frames)].

\subsubsection{RSNA \& RHPE}
For bone age assessment, we utilize two X-ray datasets: the Radiological Society of North America (RSNA) Bone Age Challenge dataset \citep{halabi2019rsna} and the Radiological Hand Posture Estimation (RHPE) dataset \citep{escobar2019hand}. The RSNA dataset contains 12,811 hand radiographs with ages ranging from 1 to 228 months. All radiographs were preprocessed and resized to 520×400 voxels to maintain consistent input dimensions. The RHPE dataset comprises 6,292 hand radiographs annotated with bone ages ranging from 10 to 242 months. For RHPE preprocessing, since each image contains both left and right hands, we first isolated the left hand by taking the left half of each image. Following \cite{escobar2019hand}, we then applied the ground truth bounding boxes to crop the hand region. The cropped images were subsequently resized to 520×400 pixels to maintain consistency with the RSNA dataset preprocessing pipeline. 

\subsubsection{Echo-Net}
Left ventricular ejection fraction (LVEF) is a critical cardiac function metric that measures the percentage of blood leaving the heart with each contraction, serving as a key indicator for diagnosing various heart conditions \cite{ouyang2020video}. The Echo-Net dataset \citep{ouyang2020video} consists of 10,030 apical-4-chamber echocardiogram videos acquired from Stanford University Hospital using various ultrasound machines including iE33, Sonos, Acuson SC2000, Epiq 5G, and Epiq 7C. Each video contains 32 frames and is annotated with LVEF values ranging from 6.91 to 96.96. The dataset has been preprocessed to remove extraneous text information, and all frames were rescaled to 112×112 pixels. During training, frame jitter was applied as an augmentation technique to enhance model robustness.

\subsubsection{A4}
The Anti-Amyloid Treatment in Asymptomatic Alzheimer's (A4) study \citep{sapra2009anti} is a prevention trial in clinically normal older individuals to test whether an anti-amyloid antibody can slow memory loss caused by AD. The dataset consists of 4,448 minimally processed, native-space amyloid PET scans acquired using [18F]-Florbetapir (FBP). Each PET acquisition comprises four five-minute frames collected between 50-70 minutes post-injection, with only these frames included to ensure an adequate signal-to-noise ratio. Each PET scan was generated by averaging co-registered frames. The scans remained in native space without spatial normalization or structural MRI-based processing. Standardized uptake value ratios (SUVRs) were calculated using the whole cerebellum as the reference region across six cortical regions: medial orbital frontal, temporal, parietal, anterior cingulate, posterior cingulate, and precuneus, with values ranging from 0.45 to 2.58. Amyloid positivity (A$\beta$+) was determined based on a predefined SUVR cutoff of $>$ 1.15, following established criteria for amyloid burden classification. We performed quality checks and center-cropped each volume to 128 x 128 x 128.

\subsection{Baseline methods}

We conducted comprehensive comparisons against both representation learning methods and task-specific approaches. For representation learning baselines, we compared SupReMix with classification-based methods including SimCLR \citep{chen2019self} and SupCon \citep{khosla2020supervised}, as well as regression-based approaches including AdaCon \citep{dai2021adaptive} and RNC \citep{zha2023rank}. Data augmentation strategies for baselines were modality-specific: for 2D images (bone age estimation), we employed standard techniques including color distortion and cropping \citep{chen2020simple}; for 3D volumetric data (amyloid SUVR and structural MRI), we utilized rotation, flipping, zooming, and color distortion following \citep{taleb20203d}. For modalities without well-established augmentation methods (fMRI time series), we omitted augmentation during training. Detailed augmentation protocols and a comparison with discretization alternatives are provided in \ref{B1} and \ref{D4}, respectively.

Furthermore, we compared SupReMix with state-of-the-art task-specific methods across different medical imaging domains. For brain age prediction with UK Biobank, we included ResNet-18 \citep{hara2018can}, ViT-B \citep{he2021multi}, Global-Local Transformer \citep{he2021global}, and the fully convolutional SFCN \citep{peng2021accurate}. The HCP-Lifespan time-series analysis incorporated 1D-ResNet \citep{hong2020holmes}, Temporal Convolutional Network (TCN) \citep{lea2017temporal}, and ViT \citep{caro2023brainlm}. Cardiac function estimation on Echo-Net was evaluated against EchoNet \citep{ouyang2020video}, graph neural network-based EchoGNN \citep{mokhtari2022echognn}, and transformer-based EchoCotr \citep{muhtaseb2022echocotr}. For bone age assessment (RSNA-BAA and RHPE-BAA), we compared with BoNet \citep{escobar2019hand}, PEAR-Net \citep{liu2020self}, Doctor Imitator (DI) \citep{chen2021doctor}, SIMBA \citep{gonzalez2020simba}, and multi-scale MMANet \citep{yang2023multi}. For Amyloid PET quantification (A4), we evaluated against SFCN \citep{peng2021accurate}, 3D DenseNet \citep{huang2017densely}, and attention-based FiA-Net \citep{he2021multi}. All methods were implemented following their original training protocols. 

\subsection{Implementation details}
For RSNA and RHPE datasets (X-ray, 2D image), we adopted a ResNet-50 \citep{he_resnet} as the backbone network. For UK Biobank and A4 (T1/PET MRI, 3D volumetric images), we adopted a 3D-ResNet-18 \citep{hara2018can} as the backbone network. For Echo-Net (Echocardiogram, video), we used a r2plus1d-18 network as the backbone\cite{tran2018closer}. For HCP-Lifespan, we adopt an 18-layer 1D-ResNet \citep{hong2020holmes} as the backbone network to process time-series data. 

Across all experiments, we utilized Adam optimizer \citep{kingma2014adam} with an initial learning rate of $10^{-3}$. For representation learning frameworks, all methods were pretrained for 200 epochs for a fair comparison, followed by linear probing for 100 epochs. Temperature parameter ($\tau$) selections are dataset-dependent, with values of 0.5 for UK Biobank and HCP-Lifespan and 1.0 for RSNA, RHPE, Echo-Net and A4. When applying a vanilla regression approach, our implementation follows the guidelines established in previous studies \citep{pmlr-v139-yang21m,peng2021accurate,cheng2023weakly}. Complete implementation details are in  \ref{C1}.

\subsection{Evaluation metrics}
We use common metrics for regression \citep{pmlr-v139-yang21m} to evaluate the performance, including Mean-Absolute-Error (MAE), Mean-Square-Error (MSE), Geometric Mean (GM) error, and Pearson correlation coefficient ($\rho$).

\section{Results}

\subsection{Understanding Supervised Contrastive Learning for Regression}

\subsubsection{Ordinality-awareness}

\begin{figure}[h]
    \centering
    \includegraphics[width=\columnwidth]{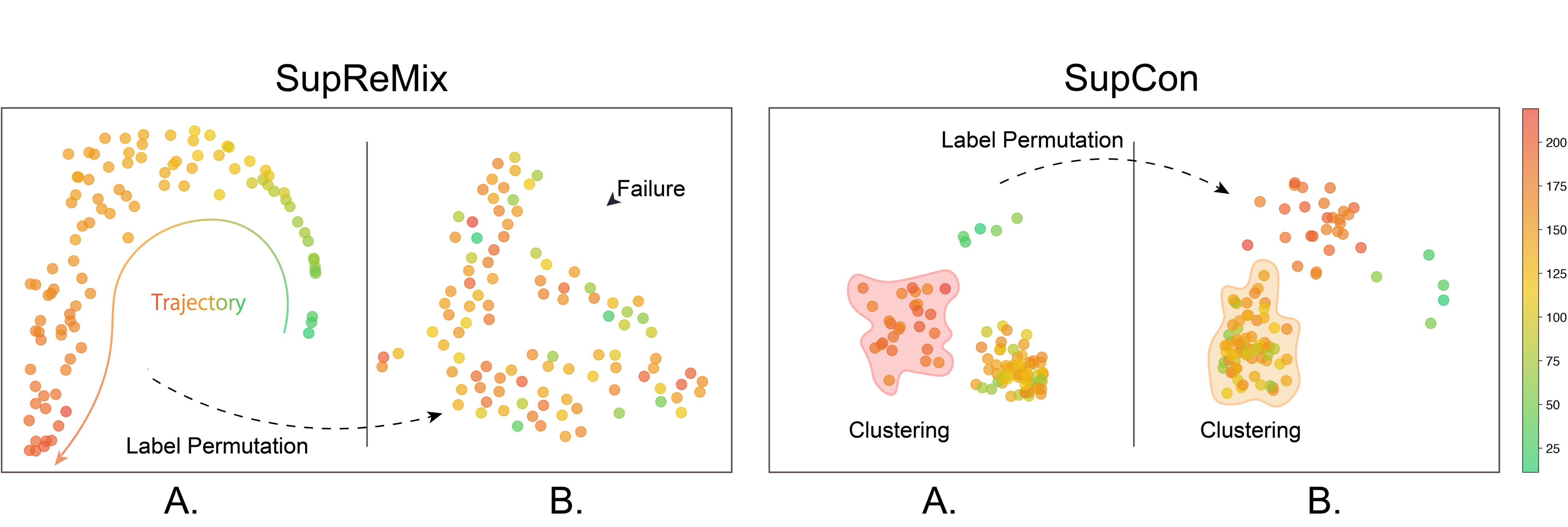}
    \caption{\textbf{Visualization (2D t-SNE map \cite{JMLR:v9:vandermaaten08a}) of learned representations from RSNA dataset \citep{halabi2019rsna} with genuine and permuted bone age labels.} \textbf{A}: representations from genuine labels; \textbf{B}: representations from permuted labels. Our method produces continuous and ordered representations in the latent space that would be disrupted if the labels were permuted. In contrast, classification-based methods like SupCon create clusters regardless of whether the labels are genuine or permuted.}
    \label{fig:ordinality}
\end{figure}

In regression tasks, a well-constructed model should learn representations where the ordering of data points in the latent space reflects the ordering of their corresponding labels \cite{zha2023rank}. To evaluate this property, we performed a label permutation experiment on RSNA X-rays \citep{halabi2019rsna} for bone age prediction. Our label permutation protocol randomly reassigns a new value to each bone age group in the dataset. For example, all X-rays originally labeled as 12 months would be reassigned to the same new value (\emph{e.g.}, 48 months), while X-rays from a different age group would be collectively reassigned to another value (\emph{e.g.}, 12 months). While this permutation preserves the equivalence relationships within each bone age group, it deliberately disrupts the natural ordering between groups. A model that truly captures the ordinal nature of regression data should exhibit markedly different behavior when trained with permuted versus genuine labels.

Shown in Figure~\ref{fig:ordinality}, when trained with genuine labels, SupReMix produces embeddings that form a continuous, ordered structure reflecting the natural progression of bone age (left panel-A). However, this structure is significantly disrupted with permuted labels (left panel-B), indicating SupReMix's sensitivity to label relationships. Conversely, SupCon treats different bone ages as discrete classes, producing clustered embeddings that remain largely unchanged between genuine and permuted labels (right panel). This reveals a fundamental limitation of applying classification-based contrastive learning to regression tasks -- such approaches fail to leverage the ordinal information inherent in regression labels.

\subsubsection{Hardness}

\begin{figure}[h]
    \centering
    \includegraphics[width=\columnwidth]{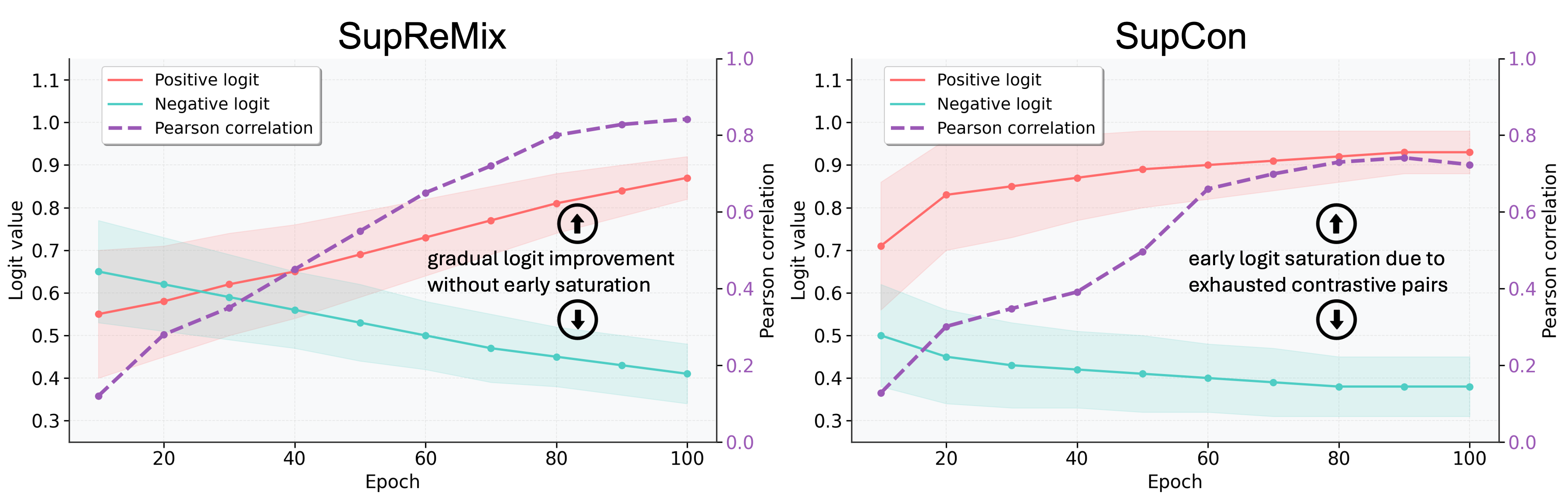}
    \caption{\textbf{Comparison of average logits over training (epochs).} SupCon exhibits early logit saturation due to exhausted contrastive pairs, while SupReMix maintains gradual improvement without early saturation. The logit values (left y-axis) and Pearson correlation (right y-axis) are tracked over 100 training epochs.}
    \label{fig:logit_figure}
\end{figure}

Previous studies on contrastive learning have demonstrated that as the training process advances, a diminishing number of hard negative samples make contributions to the overall loss function \citep{kalantidis2020hard}. Our observations confirm this phenomenon and extend it to positive samples as well, particularly when applied to regression problems. In Figure \ref{fig:logit_figure}, we utilized the RSNA bone age dataset to examine the average of logit values (\emph{i.e.}, the inner product of two embeddings with norm of 1) representing the similarity between two embeddings over the training process. We monitor all positive pairs and the top 1k most challenging negative pairs. We note that, in SupCon, the logit values for positive pairs saturate swiftly, approaching 1 after around 100 training epochs. This indicates a reduced reliance on these pairs in later learning stages as they contribute less to the loss function. Similarly, fewer negative pairs contribute to the loss function over time. In contrast, SupReMix continued to improve its performance throughout 100 epochs, due to its more challenging contrastive pairs.

\subsection{Comparison with other representation learning methods}
We compare SupReMix against both classification-based (SimCLR \citep{chen2019self}, SupCon \citep{khosla2020supervised}) and regression-based (AdaCon \citep{dai2021adaptive}, RnC \citep{zha2023rank}) representation learning methods across six medical imaging regression datasets spanning different modalities (MRI, X-ray, ultrasound, and PET). As shown in Figure \ref{fig:mae_comparision}, SupReMix consistently achieves lower MAE than all baseline methods. Taking bone age prediction on RSNA as an example, SupReMix reduces the MAE to 4.08 months compared to the classification-based method SupCon (6.79 months). The performance gains are particularly notable in challenging tasks such as cardiac function estimation, where SupReMix achieves an MAE of 4.02 compared to SupCon's 5.88. Moreover, we observe that SupReMix consistently outperforms vanilla regression (shown in dashed line). Additional performance metrics including Mean Squared Error (MSE), correlation coefficient ($\rho$), and Geometric Mean (GM) error are provided in Tables \ref{table1}, \ref{table2}, \ref{table3}, \ref{table4}, \ref{table5}, \ref{table6}.

\begin{figure}[htbp]
    \centering
    \includegraphics[width=\columnwidth]{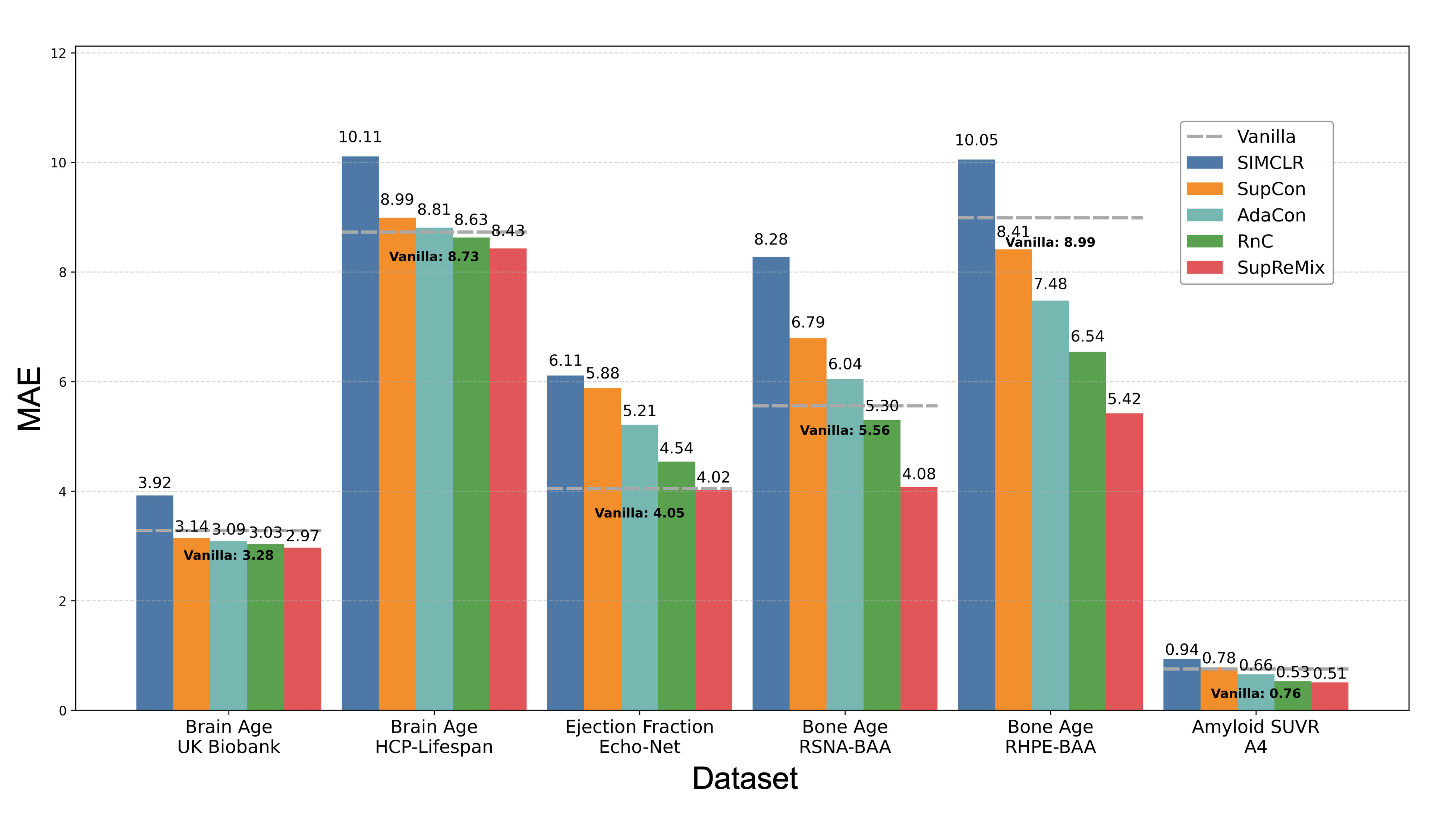}
    \caption{
    \textbf{Mean Absolute Error (MAE) Comparisons Across Datasets.} 
    The figure above compares the MAE of six methods: Vanilla, SIMCLR, SupCon, AdaCon, RnC, and SupReMix, across six datasets: UK Biobank, HCP-Lifespan, Echo-Net, RSNA, RHPE, and A4. Statistical significance between methods, determined by paired t-tests, is annotated, where *** indicates \( p < 0.001 \), ** indicates \( p < 0.01 \), and ns denotes no significant difference. SupReMix demonstrates the lowest MAE across most datasets, showcasing superior performance on broad medical imaging regression tasks.}
    \label{fig:mae_comparision}
\end{figure}

To better understand the relationship between prediction performance and distribution of learned latent representations, we analyze different methods through both scatter plots (showing prediction accuracy) and t-SNE \citep{van2008visualizing} visualizations in two tasks: brain age prediction from UK Biobank MRI (Figure \ref{fig:scatter_tsne}, shown in blue) and amyloid burden (SUVR) prediction from A4 PET scans (Figure \ref{fig:scatter_tsne}, shown in red). As illustrated in Figure~\ref{fig:scatter_tsne}, the visualization reveals distinct characteristics and limitations of existing approaches. SimCLR,\citep{van2008visualizing} while maintaining reasonable prediction accuracy, produces scattered representations that lack a clear ordinal structure in the latent space. Classification-based methods such as SupCon tend to form discrete clusters, which is suboptimal for continuous regression tasks, as evidenced by the disconnected groups in its t-SNE plot. RnC shows improved ordinal awareness but still struggles to fully capture the continuous nature of the underlying data, particularly at the extremes of the value range. In contrast, SupReMix demonstrates superior representation learning performance by preserving both local smoothness and global ordering, as shown by the more coherent progression in the t-SNE space and tighter alignment along the perfect prediction line in the scatter plots. The visualization aligns with the quantitative improvements observed in our empirical evaluation.

\begin{table}[h!]
    \centering
    \begin{tabular}{cc}
        \begin{minipage}[bt]{0.47\textwidth}
        \centering
        \caption{Evaluation on UKB }
        \resizebox{0.97\textwidth}{!}{%
                \begin{tabular}{lcc}
                    \toprule
                    Metrics & $\rho {\uparrow}$ & MSE ${\downarrow}$\\
                    \midrule
                    Vanilla  & 0.822 & 16.12 \\
                    SimCLR & 0.712 & 19.12 \\
                    SupCon & 0.844 & 15.78 \\
                    AdaCon & 0.850 & 15.24 \\
                    RNC & 0.856 & 14.69 \\
                    \rowcolor{gray!20} \textbf{SupReMix} & \textbf{0.863} & \textbf{14.37} \\
                    \midrule
                    \rowcolor{white} GAINS (\textbf{Ours} VS. Vanilla(\%)) & \textbf{\textcolor{darkergreen}{+5.0}} & \textbf{\textcolor{darkergreen}{+10.9}} \\
                    \bottomrule
                \end{tabular}
                \label{table1}
            }
        \end{minipage}
        &
        \begin{minipage}[bt]{0.47\textwidth}
        \centering
        \caption{Evaluation on HCP }
        \resizebox{0.97\textwidth}{!}{
            \begin{tabular}{llc}
                \toprule
                Metrics  &$\rho {\uparrow}$& MSE ${\downarrow}$ \\
                \midrule
                Vanilla   &0.822& 123.22 \\
                SimCLR  &0.712& 153.49 \\
                SupCon  &0.844& 126.43 \\
                AdaCon  &0.847& 124.11 \\
                RNC  &0.856& 121.78 \\
                \rowcolor{gray!20}\textbf{SupReMix}  &\textbf{0.863}& \textbf{116.11} \\
                \midrule
                \rowcolor{white} GAINS (\textbf{Ours} VS. Vanilla(\%)) &\textbf{\textcolor{darkergreen}{
                $+$5.0}}& \textbf{\textcolor{darkergreen}{
                $+$5.8}} \\
                \bottomrule
            \end{tabular}
            \label{table2}}
        \end{minipage} 
        \\ \\
        \begin{minipage}[bt]{0.47\textwidth}
            \centering
             \caption{Evaluation on EchoNet }
            \resizebox{0.97\textwidth}{!}
            {%
                \begin{tabular}{lcc}
                    \toprule
                    Metrics & MSE ${\downarrow}$ & GM ${\downarrow}$\\
                    \midrule
                    Vanilla  & 20.55 & 4.51 \\
                    SimCLR & 50.12 & 5.88 \\
                    SupCon & 40.32 & 5.39 \\
                    AdaCon & 32.94 & 4.86 \\
                    RNC & 25.55 & 4.32 \\
                    \rowcolor{gray!20} \textbf{SupReMix} & \textbf{19.79} & \textbf{3.80} \\
                    \midrule
                    \rowcolor{white} GAINS (\textbf{Ours} VS. Vanilla(\%)) & \textbf{\textcolor{darkergreen}{+3.7}} & \textbf{\textcolor{darkergreen}{+15.7}} \\
                    \bottomrule
                \end{tabular}
                \label{table3}
            }
        \end{minipage} 
        &
        \begin{minipage}[bt]{0.47\textwidth}
            \centering
            \caption{Evaluation on RSNA-BAA }
            \resizebox{0.97\textwidth}{!}{%
                \begin{tabular}{lcc}
                    \toprule
                    Metrics & MSE ${\downarrow}$ & GM ${\downarrow}$\\
                    \midrule
                    Vanilla  & 74.231 & 4.449 \\
                    SimCLR & 117.388 & 5.418 \\
                    SupCon & 76.523 & 4.887 \\
                    AdaCon & 73.198 & 4.455 \\
                    RNC & 69.873 & 4.023 \\
                    \rowcolor{gray!20} \textbf{SupReMix} & \textbf{43.652} & \textbf{3.221} \\
                    \midrule
                    \rowcolor{white} GAINS (\textbf{Ours} VS. Vanilla(\%)) & \textbf{\textcolor{darkergreen}{+41.2}} & \textbf{\textcolor{darkergreen}{+27.6}} \\
                    \bottomrule
                \end{tabular}
                \label{table4}
            }
        \end{minipage} 
        \\ \\
        \begin{minipage}[bt]{0.47\textwidth}
            \centering
            \caption{Evaluation on RHPE-BAA }
            \resizebox{0.97\textwidth}{!}{%
                \begin{tabular}{lcc}
                    \toprule
                    Metrics & MSE ${\downarrow}$ & GM ${\downarrow}$\\
                    \midrule
                    Vanilla  & 142.763 & 5.892 \\
                    SimCLR & 182.784 & 6.311 \\
                    SupCon & 124.425 & 5.358 \\
                    AdaCon & 99.409 & 5.145 \\
                    RNC & 74.392 & 4.932 \\
                    \rowcolor{gray!20} \textbf{SupReMix} & \textbf{62.983} & \textbf{3.992} \\
                    \midrule
                    \rowcolor{white} GAINS (\textbf{Ours} VS. Vanilla(\%)) & \textbf{\textcolor{darkergreen}{+55.9}} & \textbf{\textcolor{darkergreen}{+32.2}} \\
                    \bottomrule
                \end{tabular}
                \label{table5}
            }
        \end{minipage} 
        &
        \begin{minipage}[bt]{0.47\textwidth}
            \centering
            \caption{Evaluation on A4 }
            \resizebox{0.97\textwidth}{!}
            {%
                \begin{tabular}{llc}
                    \toprule
                    Metrics  &$\rho {\uparrow}$& MSE ${\downarrow}$ \\
                    \midrule
                    Vanilla   &0.755& 0.006 \\
                    SimCLR  &0.542& 0.018 \\
                    SupCon  &0.712& 0.007 \\
                    AdaCon  &0.755& 0.006 \\
                    RNC  &0.764& 0.005 \\
                    \rowcolor{gray!20} \textbf{SupReMix}  &\textbf{0.792}& \textbf{0.004} \\
                    \midrule
                    \rowcolor{white} GAINS (\textbf{Ours} VS. Vanilla(\%))  &\textbf{\textcolor{darkergreen}{+4.9}}& \textbf{\textcolor{darkergreen}{+33.3}} \\
                    \bottomrule
                \end{tabular}
                \label{table6}
            }
        \end{minipage} 
    \end{tabular}
    \label{table:main_result}
\end{table}
\begin{figure}[htbp]
    \centering
    \includegraphics[width=\columnwidth]{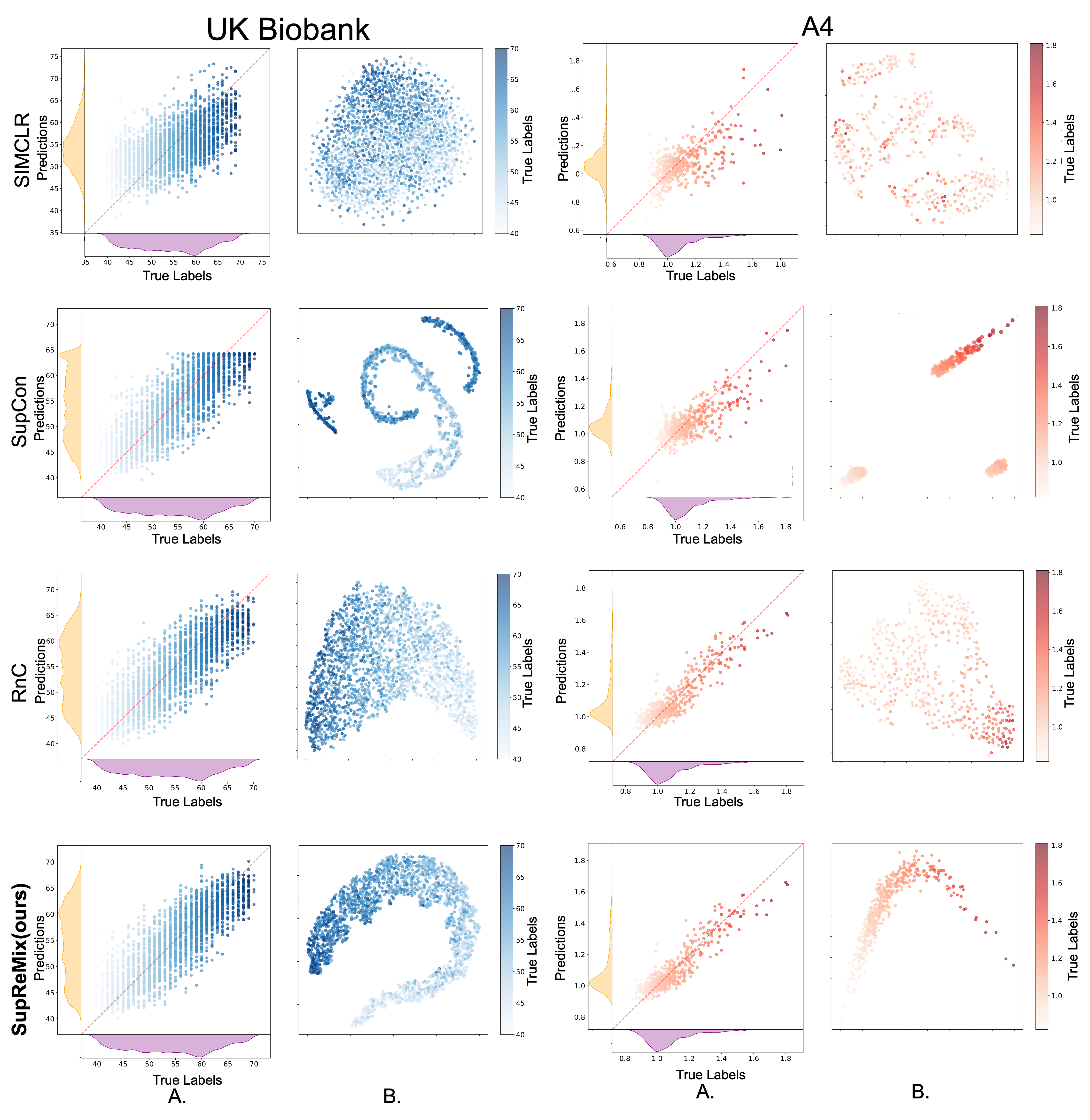}
    \caption{\textbf{Visualization of learned representations on two medical imaging tasks: brain age prediction (blue) and amyloid burden prediction (red).} \textbf{A}: Scatter plots of true vs. predicted values with density. \textbf{B}: t-SNE visualizations of learned representations colored by true labels. Baseline methods show various limitations: SimCLR lacks ordinal structure, SupCon forms discrete clusters unsuitable for continuous regression, and RnC shows incomplete ordinal preservation. In contrast, SupReMix leads to better representations: (1) smooth progression in t-SNE visualization for both age and amyloid burden predictions, and (2) effectively maintaining both local structure and global ordering, leading to better prediction accuracy across the full value range.}
    \label{fig:scatter_tsne}
\end{figure}

\newpage

\subsection{Representation continuity}
We have qualitatively observed the continuity of representations through t-SNE plot in Figure \ref{fig:scatter_tsne} . In this section, we quantitatively analyze their continuity and smoothness by adapting the Lipschitz continuity analysis \citep{tang2024understanding}. Specifically, for a representation $\phi$, we examine the local Lipschitz factor ($L$) between neighboring points:

\begin{equation}
    L(x, x') = \frac{\|T(x) - T(x')\|}{\|\phi(x) - \phi(x')\|}
\end{equation}

where $x$ and $x'$ are two input data, and $T(\cdot)$ represents the regression target. We normalize $L$ to account for different embedding dimensions by constructing a Normalized Lipschitz Factor Distribution (NLFD):
\begin{itemize}
    \item First, we apply full-batch normalization to ensure zero mean and unit variance per coordinate across the dataset $\mathcal{D}$.
    \item For each point $x \in \mathcal{D}$, we compute the Lipschitz factor with respect to its nearest $\ell_2$ neighbor in representation space.
    \item For each point $x \in \mathcal{D}$, we compute the Lipschitz factor with respect to its nearest $\ell_2$ neighbor in representation space.
    \item To standardize across different embedding dimensions $d$, we scale all factors by $\sqrt{d}$.
\end{itemize}

The resulting distribution's skewness provides crucial insights into representation quality. Left-skewed distributions (←) are better because they indicate that the representation is dominated by smaller Lipschitz factors, meaning the representation preserves local similarities more effectively. As shown in Figure \ref{fig:NLFD} (A), SupReMix consistently produces more left-skewed distributions across different medical imaging tasks, indicating superior  structures of continuity. This skewness towards smaller Lipschitz factors suggests that SupReMix learns representations where similar inputs reliably map to similar positions in latent space - a crucial property for robust regression.

To quantify the relationship between representation continuity and regression performance, we conduct a bootstrapping analysis with bootstrap sample size $B=100$, computing both the NLFD gap and regression performance gap. To formally quantify the differences between NLFDs from SupReMix ($\phi_\text{supremix}$) and vanilla ($\phi_\text{vanilla}$) representations, we calculate a Z-score gap:
$Z = \frac{\mu_{\phi_\text{vanilla}} - \mu_{\phi_\text{supremix}}}{\sqrt{\sigma^2_{\phi_\text{vanilla}} + \sigma^2_{\phi_\text{supremix}}}}$
where $\mu_\phi$ and $\sigma_\phi$ represent the mean and standard deviation of the NLFD for a given representation $\phi$. From Figure \ref{fig:NLFD} (B), we observed consistent correlations across datasets (UK Biobank: $\rho=0.57$, RSNA: $\rho=0.69$, A4: $\rho=0.55$), empirically validating that smoother representations lead to better regression performance.

\begin{figure}[htbp]
    \centering
    \includegraphics[width=\columnwidth]{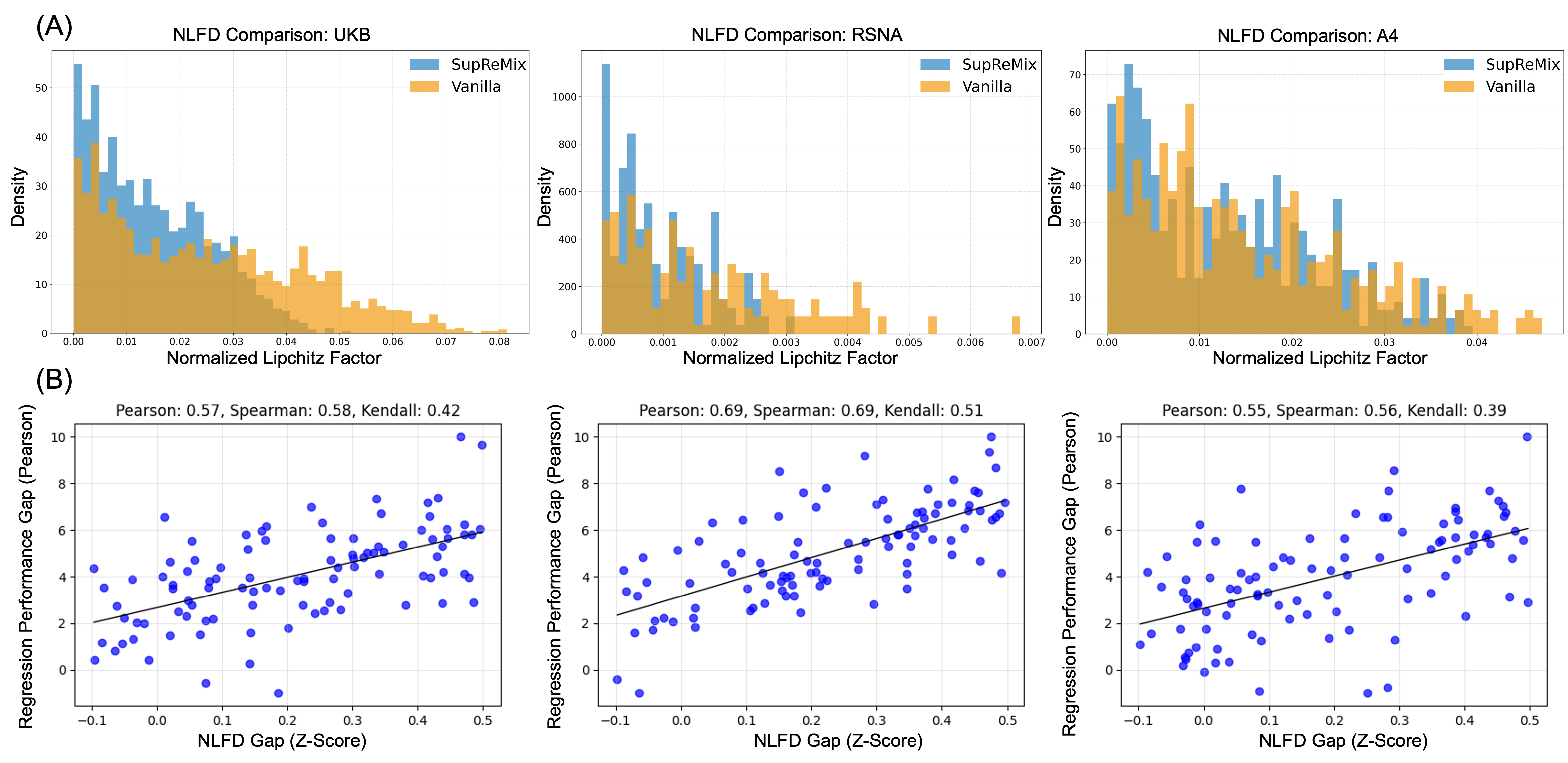}
    \caption{\textbf{Analyzing representation continuity in medical image regression tasks. }
    \textbf{(A)} Left-skewness (←) is better. Normalized Lipschitz Factor Distributions (NLFD) comparing SupReMix (blue) and vanilla (orange) representations on UK Biobank brain MRI , RSNA pediatric hand X-ray, and A4 Amyloid PET datasets. 
    \textbf{(B)} Scatter plots showing the relationship between representation smoothness (x-axis, measured by NLFD gap) and regression performance gap (y-axis, measured by correlation gap) across bootstrapped samples. Pearson correlations are reported for each task (UK Biobank: $\rho=0.69$, RSNA: $\rho=0.55$, A4: $\rho=0.57$). Black lines indicate linear regression fits.}

    \label{fig:NLFD}
\end{figure}

\subsection{SupReMix pretraining improves task-specific methods}

For each medical imaging regression task, researchers have proposed various specialized architectures and loss functions to address task/modality-specific challenges. For instance, in brain age prediction, SFCN \citep{peng2021accurate} introduces a fully convolutional architecture with careful kernel size selection to capture age-related brain patterns, while Global-Local Transformer \citep{he2021global} leverages both local anatomical features and global brain structure through a dual-branch design. To understand whether SupReMix pretraining can improve on top of these task-specific methods, we compare two configurations for each architecture (Figure \ref{fig:multi_panel_mae_comparison}): direct training with task-specific methods (blue bars) and training combined with SupReMix pretraining (orange bars), with SupReMix serving as a \emph{plug-and-play} solution that requires no modifications to the core architecture designs. 

As shown in the Figure \ref{fig:multi_panel_mae_comparison}, for brain age prediction on UK Biobank, we evaluate four methods including the widely-used ResNet-18 \citep{hara2018can}, the vision transformer ViT-B \citep{he2021multi}, the Global-Local Transformer \citep{he2021global} designed for capturing multi-scale features, and SFCN \citep{peng2021accurate} specialized for brain age estimation. Across all these methods, SupReMix pretraining improves the MAE by \textbf{4.5\%-17.3\%} compared to training from scratch. Similarly, for HCP-Lifespan, we examine three architectures including 1D-ResNet \citep{hong2020holmes}, temporal convolutional network (TCN) \citep{lea2017temporal}, and ViT \citep{caro2023brainlm} for sequential feature extraction, where SupReMix leads to consistent improvements of \textbf{4.3\%-7.1\%}. In Echo-Net, where we evaluate methods for echocardiography analysis, we compare architectures including EchoNet \citep{ouyang2020video}, EchoGNN for graph-based architecture \citep{mokhtari2022echognn}, and EchoCotr \citep{muhtaseb2022echocotr} with a CNN-transformer architecture, observing MAE reductions of \textbf{0.7\%-14.2\%}. For RSNA-BAA and RHPE-BAA (bone age assessment), we examine specialized architectures such as BoNet \citep{escobar2019hand}, PEAR-Net \citep{liu2020self}, DI \citep{chen2021doctor}, SIMBA \citep{gonzalez2020simba}, and MMANet \citep{yang2023multi}, achieving improvements \textbf{up to 7.4\% and 27.6\%} , respectively. In Amyloid-PET analysis, we evaluate SFCN \citep{peng2021accurate}, 3D DenseNet \citep{huang2017densely}, and FiA-Net \citep{he2021multi}, with improvements ranging \textbf{from 2.4\% to 10.6\%}. 

These systematic comparisons demonstrate that SupReMix pretraining can consistently achieve performance improvements across a diverse range of medical imaging regression tasks regardless of modalities and underlying architectures (CNNs, transformers, or graph neural networks) and their task-specific optimizations.

\begin{figure}[htbp]
    \centering
    \includegraphics[width=\columnwidth]{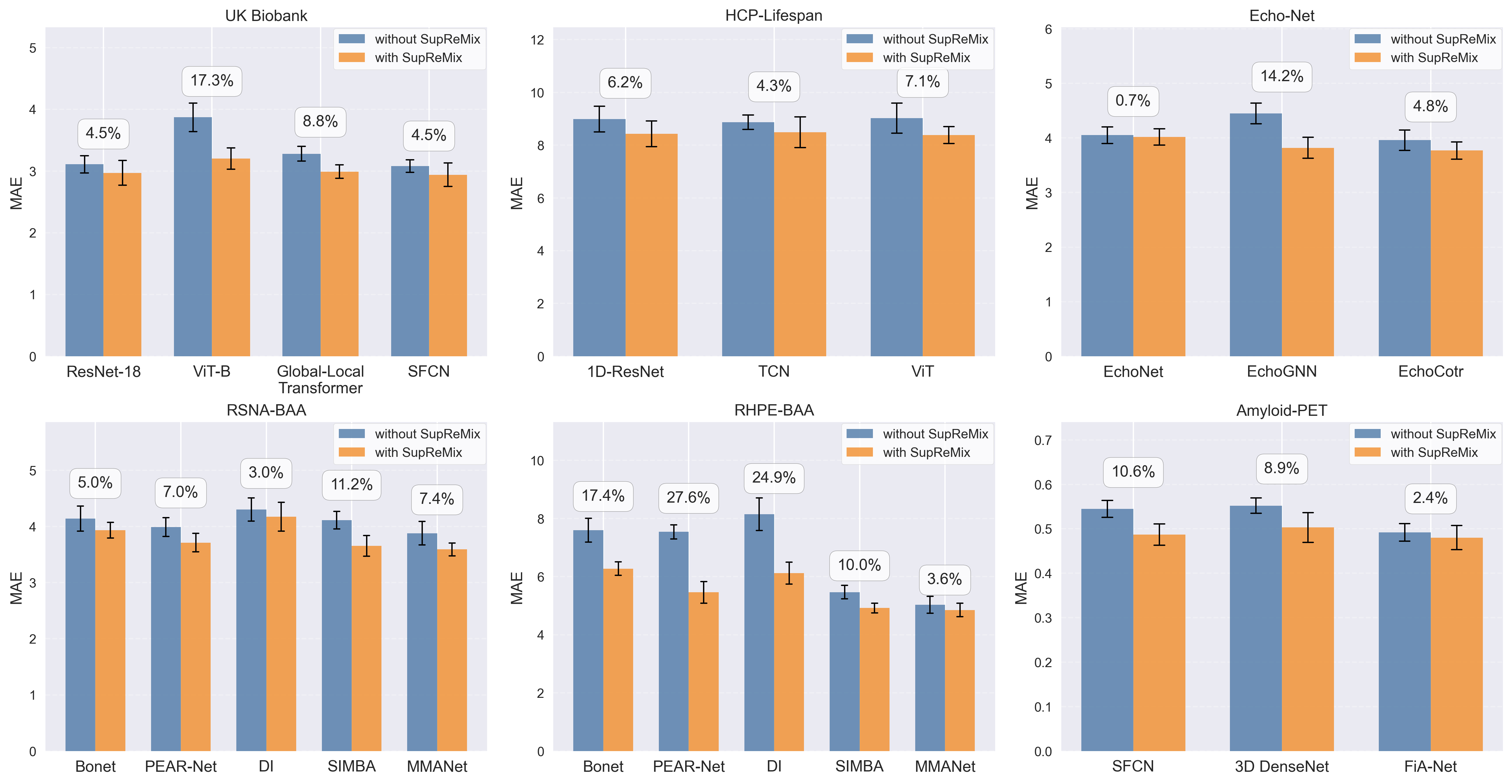}
    \caption{\textbf{Mean Absolute Error (MAE) comparisons between task-specific methods with and without SupReMix pretraining.} Each panel represents a specific dataset, showcasing the performance improvement (in percentages) achieved by integrating SupReMix into various models. The height of each bar represents the mean value calculated from three runs with different random seeds, while the error bars indicate the corresponding standard deviations.}
    \label{fig:multi_panel_mae_comparison}
\end{figure}
\newpage

\subsection{Resilience to reduced training data}
While modern deep learning has achieved remarkable success through large-scale training datasets, obtaining extensive labeled medical images with precise continuous annotations (\emph{e.g.}, age, disease scores, organ measurements) remains a significant challenge. This challenge stems from the high cost of expert annotations and time-intensive clinical assessments. This is particularly true for regression tasks in medical imaging, where acquiring accurate numerical labels often requires specialized expertise and complex clinical measurements \citep{zhou2023deep}.

To address this limitation, developing methods that can perform robustly under limited training data scenarios is crucial for practical medical image analysis applications. In this study, we conduct a comprehensive analysis to investigate how SupReMix influences model performance under different data availability scenarios across five medical imaging datasets.

As shown in Figure~\ref{fig:data_efficiency}, we evaluate the performance by varying the training set sizes. Our experiments span diverse datasets with different scales: UK Biobank (2,500-20,000 samples), A4 Amyloid-PET (500-3,500 samples), RSNA-BAA (2,000-12,000 samples), RHPE-BAA (1,000-5,000 samples), and EchoNet (1,000-7,000 samples). For each dataset and sample size configuration, we compare two settings while keeping all other training parameters consistent: vanilla training (blue solid lines) and SupReMix (orange dashed lines).

The results demonstrate SupReMix's significant advantages, particularly in low-data regimes. This is evidenced by consistent performance gains across all three metrics (MAE, R, and MSE) when training data is limited. In the extreme low-data scenario of UK Biobank (2,500 samples), SupReMix achieves an MAE of 3.8 years compared to 4.7 years for vanilla training - a 19\% reduction in error. The performance advantage is even more pronounced in the A4 dataset, where SupReMix maintains R $>$ 0.7 with just 500 samples, while vanilla training requires approximately 2,000 samples (4x more data) to achieve comparable performance. Similar patterns emerge in RSNA-BAA, where SupReMix achieves R = 0.62 with 2,000 samples, matching vanilla training performance at 6,000 samples.

\begin{figure}[h!]
    \centering
    \includegraphics[width=\columnwidth]{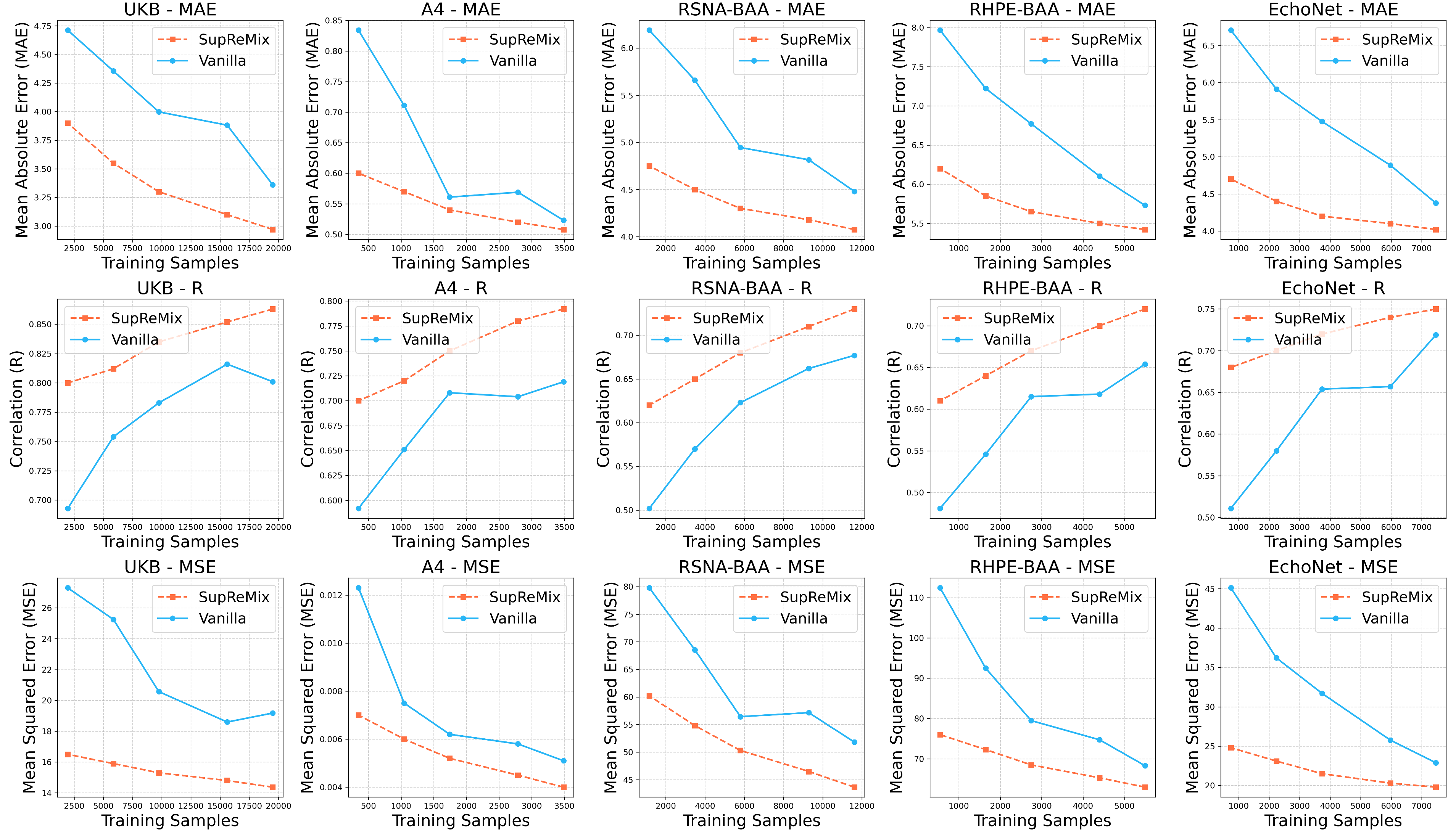}
    \caption{\textbf{Performance comparison of SupReMix and Vanilla models with reduced training data.} The solid blue lines represent SupReMix, while the dashed orange lines represent Vanilla models. Each row represents a specific metric: MAE (Mean Absolute Error), R (Correlation), and MSE (Mean Squared Error). Each column corresponds to a dataset. SupReMix demonstrates consistent performance improvements across all datasets and metrics, particularly with smaller training sample sizes, highlighting its robustness in low-data regimes. }
    \label{fig:data_efficiency}
\end{figure}

\newpage

\subsection{Transfer learning}

\begin{table}[h]
    \centering
    \footnotesize
    \caption{Transfer learning results comparing different methods across bone age assessment datasets.}
    \begin{tabular}{l|cccc|cccc}
    \toprule
    & \multicolumn{4}{c|}{RSNA $\rightarrow$ RHPE (subsampled, 2k)} & \multicolumn{4}{c}{RHPE $\rightarrow$ RSNA (subsampled, 1k)} \\
    \cmidrule{2-9}
    & \multicolumn{2}{c}{Linear Probing} & \multicolumn{2}{c|}{Fine-tuning} & \multicolumn{2}{c}{Linear Probing} & \multicolumn{2}{c}{Fine-tuning} \\
    \cmidrule{2-9}
    Method & MAE$^{\downarrow}$ & R$^{\uparrow}$ & MAE$^{\downarrow}$ & R$^{\uparrow}$ & MAE$^{\downarrow}$ & R$^{\uparrow}$ & MAE$^{\downarrow}$ & R$^{\uparrow}$ \\
    \midrule
    SimCLR & 11.85 & 0.512 & 9.42 & 0.658 & 11.36 & 0.521 & 9.12 & 0.668 \\
    SupCon & 11.43 & 0.528 & 9.12 & 0.672 & 11.02 & 0.538 & 8.89 & 0.682 \\
    AdaCon & 10.86 & 0.545 & 8.75 & 0.688 & 10.85 & 0.545 & 8.65 & 0.695 \\
    RnC & 10.24 & 0.562 & 8.32 & 0.702 & 10.54 & 0.562 & 8.42 & 0.712 \\
    \textbf{SupReMix} & \textbf{9.58} & \textbf{0.589} & \textbf{7.85} & \textbf{0.728} & \textbf{10.12} & \textbf{0.589} & \textbf{8.04} & \textbf{0.735} \\
    \bottomrule
    \end{tabular}
    \label{table:transfer}
\end{table}

We evaluate the transferability of representations learned by SupReMix across different bone age assessment datasets. Our evaluation focuses on two transfer learning scenarios: (1) from RSNA (source, ~12K samples) to RHPE (target, ~2K subsampled samples), and (2) from RHPE (source, ~6K samples) to RSNA (target, 2K subsampled samples). For each scenario, we assess two transfer learning strategies: linear probing, where we train only a linear layer while keeping the pre-trained encoder frozen, and fine-tuning, where we update the entire network on the target dataset.

As shown in Table~\ref{table:transfer}, SupReMix consistently outperforms all baseline methods across both transferring scenarios and different strategies. In the RSNA~$\rightarrow$RHPE transferring, SupReMix achieves MAEs of 9.58 and 7.85 for linear probing and fine-tuning,  respectively, representing substantial improvements of 19.2\% and 16.7\% over SimCLR. The performance advantage persists in the RHPE$\rightarrow$~RSNA direction, where SupReMix attains MAEs of 10.12 (linear probing) and 8.04 (fine-tuning). In addition, the strong correlation coefficients achieved by SupReMix in all settings further validate its ability to learn robust and transferable representations for bone age assessment tasks. These results demonstrate that SupReMix not only excels in single-dataset scenarios but also facilitates effective knowledge transferring across different bone age assessment datasets.

\subsection{Gender-Aware Representation in Bone Age Assessment}

Another key characteristic of SupReMix is its ability to preserve important biological factors when forming contrastive pairs. In the RSNA dataset, our approach generates hard-negative and hard-positive pairs exclusively within the same gender groups, which directly contributes to the gender-aware representations shown in Figure~\ref{fig:RSNA-gender}. The visualization reveals distinct continuous trajectories for male and female subjects, both following clear age progression patterns.

By constraining mixup operations within gender boundaries, SupReMix effectively models the natural gender differences in bone maturation—where females typically mature earlier than males of the same chronological age. This approach highlights the importance of biologically informed representation learning in medical imaging regression tasks.

\begin{figure}[htbp]
    \centering
    \includegraphics[width=0.8\columnwidth]{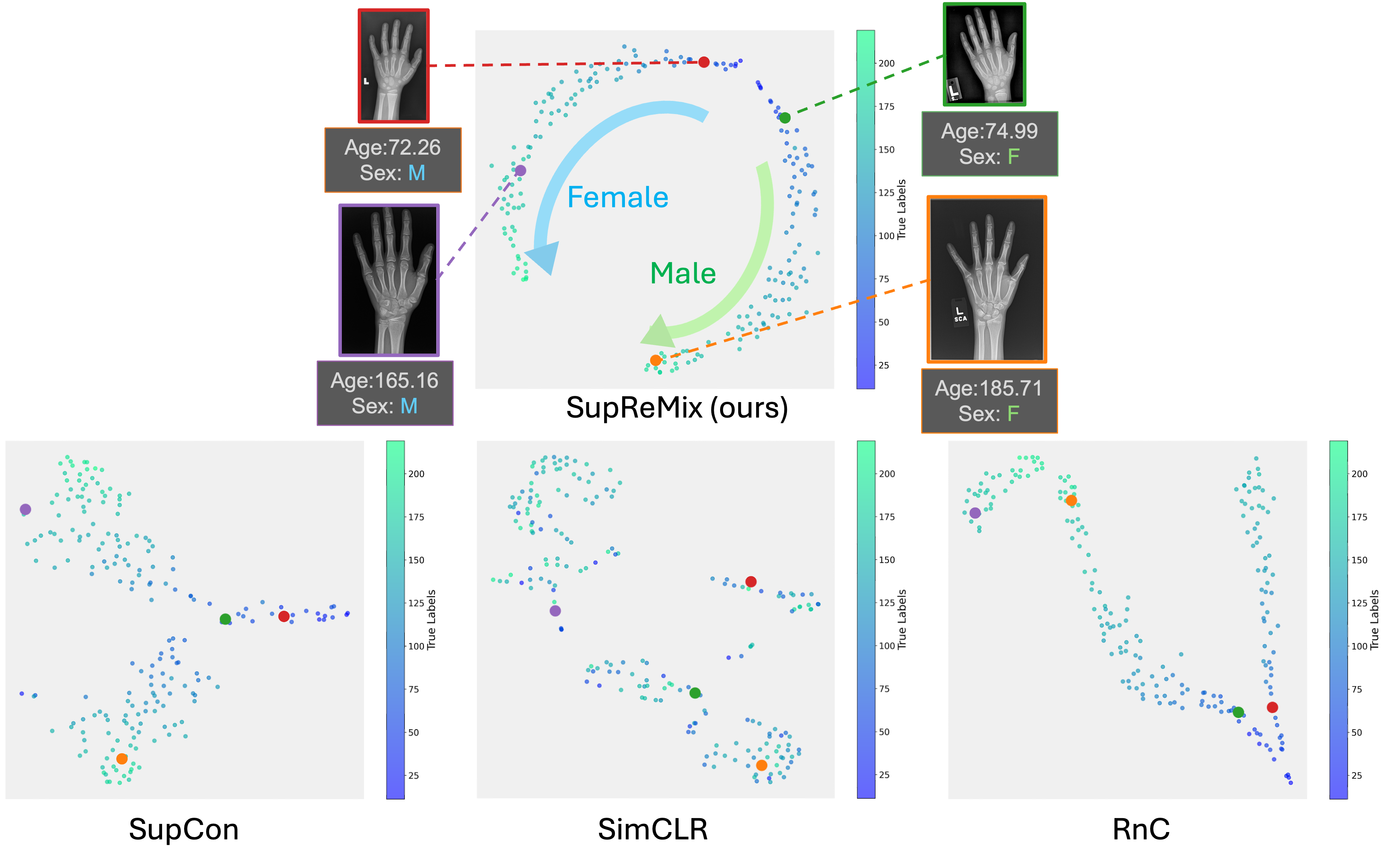}
    \caption{\textbf{SupReMix simultaneously captures bone age progression and gender differences in the latent space.} While SupReMix (top) forms continuous gender-separated trajectories, alternative methods (bottom) struggle to preserve both gender differentiation and age continuity.}
    \label{fig:RSNA-gender}
\end{figure}

\subsection{Ablation studies}

We conduct ablation studies on RSNA-BAA dataset to investigate each component in SupReMix. As shown in Table \ref{table9}, each component contributes to performance improvement, with the complete SupReMix framework achieving the best results. We also examine the impact of window size $\gamma$ for Mix-pos pair generation. Table~\ref{table10} shows that moderate window sizes ($\gamma=5$ for RSNA, $\gamma=1.0$ for UK Biobank) achieve optimal performance, while too small or too large windows lead to degraded results.

Finally, we study three beta distributions for sampling the mixing ratio $\lambda_1$ in Mix-neg generation, each representing different sampling strategies relative to the anchor point. The Beta(2,8) distribution represents a ``close-to-negative" sampling strategy, where the generated mix-neg samples are biased towards the negative samples, creating more moderate hard negative pairs. In contrast, the Beta(8,2) distribution represents a ``close-to-anchor" strategy where mix-neg samples are closer to the anchor, resulting in more aggressive hard negatives. The Beta(5,5) distribution provides a symmetric sampling strategy with no bias towards either anchor or negative samples. As shown in Table \ref{table11}, the ``close-to-negative" strategy (Beta(2,8)) consistently outperforms other configurations across all six datasets, with notable improvements on UK Biobank (MAE/GM: 2.98/14.37) and Echo-Net (4.88/6.30). This suggests that generating mix-neg samples closer to the negative examples, rather than the anchor, leads to more effective training by avoiding overly aggressive hard negative pairs that could potentially destabilize representation learning.

\begin{table}[h]
    \begin{minipage}{0.55\textwidth}
        \centering
        \footnotesize
        \setlength{\tabcolsep}{2pt}  
        \caption{Ablation study}
            \begin{tabular}{@{}lccc@{}}  
            \toprule
            Method & MAE ${\downarrow}$ & MSE ${\downarrow}$ & GM ${\downarrow}$\\
            \midrule
            SupCon  & 6.79 & 76.5 & 4.89 \\
            SupCon+DM & 6.45 & 70.2 & 4.55 \\
            SupCon+Mix-neg & 5.92 & 65.8 & 4.32 \\
            SupCon+Mix-neg+DM & 4.85 & 52.4 & 3.85 \\
            \textbf{SupReMix} & \textbf{4.08} & \textbf{43.7} & \textbf{3.22} \\
            \bottomrule
            \end{tabular}
        \label{table9}
    \end{minipage}%
    \begin{minipage}{0.45\textwidth}
        \centering
        \footnotesize
        \setlength{\tabcolsep}{3pt}  
        \caption{Choice of window size $\gamma$}
            \begin{tabular}{@{}cc|cc@{}}  
            \toprule
            RSNA & MAE ${\downarrow}$ & UKB & MAE ${\downarrow}$\\
            \midrule
             $\gamma=1$ & 4.86 & $\gamma=0.2$ & 3.15 \\
             $\gamma=3$ & 4.45 & $\gamma=0.5$ & 3.08 \\
             $\gamma=5$ & \textbf{4.08} & $\gamma=1.0$ & \textbf{2.97} \\
             $\gamma=10$ & 4.95 & $\gamma=2.0$ & 3.22 \\
             $\gamma=\infty$ & 5.80 & $\gamma=\infty$ & 3.45 \\
            \bottomrule
            \end{tabular}
        \label{table10}
    \end{minipage}
\end{table}

\begin{table}[htbp]
    \centering 
    \caption{Choice of beta distribution for sampling mixing ratios in Mix-neg generation. Results shown as MAE / MSE across six medical imaging regression datasets.}
    \newcolumntype{A}{ >{\centering\arraybackslash} m{4.0cm} }
    \newcolumntype{B}{ >{\centering\arraybackslash} m{0.8cm} }
    \newcolumntype{C}{ >{\centering\arraybackslash} m{0.8cm} }
    \newcolumntype{D}{ >{\centering\arraybackslash} m{0.8cm} }
    \newcolumntype{E}{ >{\centering\arraybackslash} m{0.8cm} }
    \newcolumntype{F}{ >{\centering\arraybackslash} m{0.8cm} }
    \newcolumntype{G}{ >{\centering\arraybackslash} m{0.8cm} }
    \begin{tabular}{ABCDEFG}
    \toprule 
    $(\alpha,\beta)$ & UKB & HCP & RSNA & RHPE & Echo & A4 \\ 
    \hline
    \includegraphics[width=1\linewidth, height=0.25\linewidth]{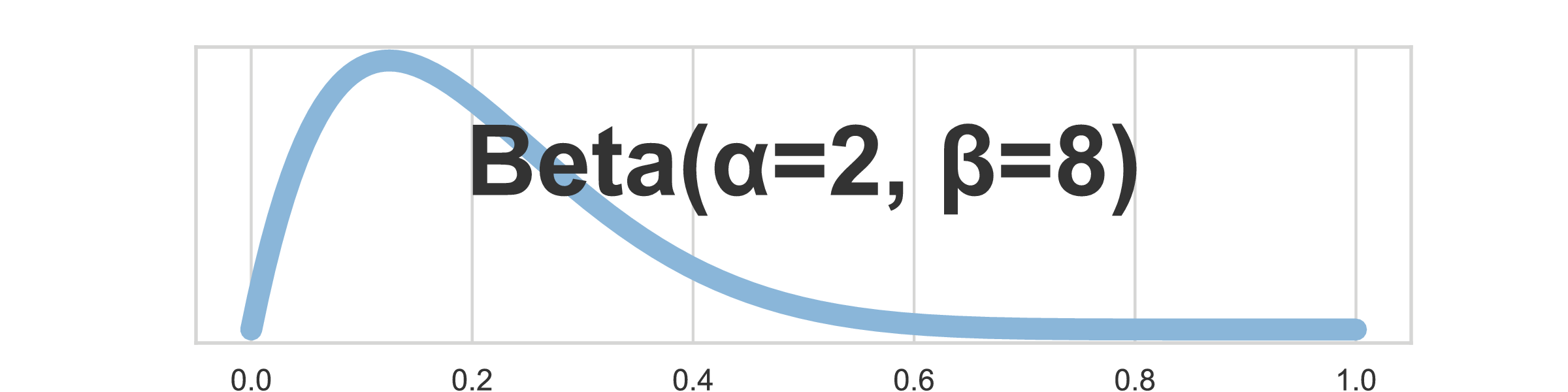} & \textbf{\shortstack{3.8\\14.5}} & \textbf{\shortstack{5.8\\65.2}} & \textbf{\shortstack{4.2\\45.8}} & \textbf{\shortstack{5.5\\64.2}} & \textbf{\shortstack{4.2\\20.5}} & \textbf{\shortstack{0.52\\0.004}} \\ 
    \hline
    \includegraphics[width=1\linewidth, height=0.25\linewidth]{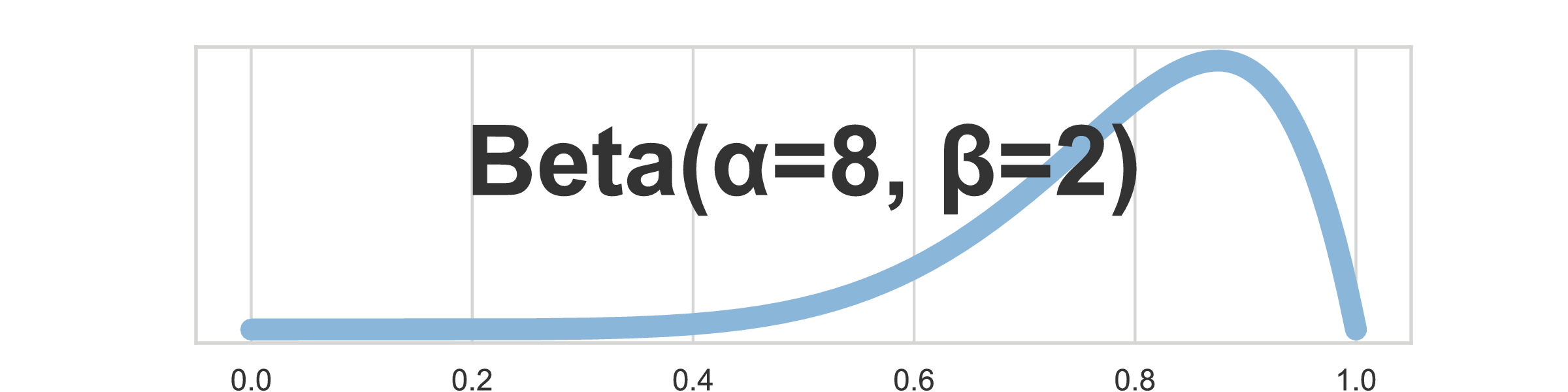} & \shortstack{4.1\\15.2} & \shortstack{6.2\\68.5} & \shortstack{4.5\\48.2} & \shortstack{5.8\\68.5} & \shortstack{4.5\\22.8} & \shortstack{0.56\\0.006} \\
    \hline
    \includegraphics[width=1\linewidth, height=0.25\linewidth]{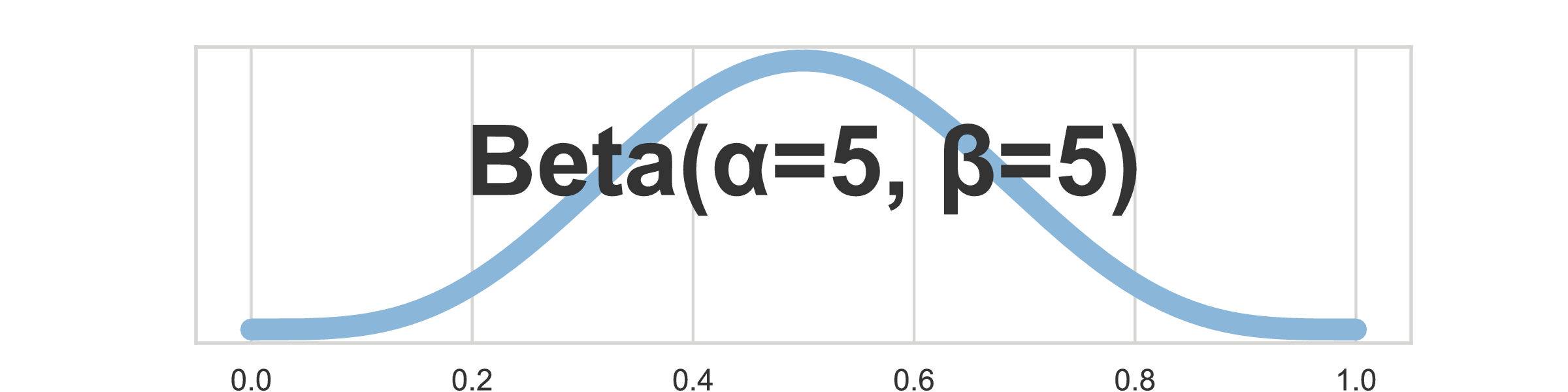} & \shortstack{3.95\\14.8} & \shortstack{6.0\\66.8} & \shortstack{4.35\\47.0} & \shortstack{5.65\\66.4} & \shortstack{4.35\\21.6} & \shortstack{0.54\\0.005} \\
    \bottomrule
    \end{tabular}
    \label{table11}
\end{table}

\section{Discussion and conclusion}

In this work, we targeted medical image regression tasks, which often receive less attention compared to classification problems. When examining learned representations in regression tasks like bone age estimation, we identified two key challenges: (1) ordinality-awareness in the representation space (Section 5.1.1) and (2) hardness of contrastive pairs (Section 5.1.2). To address these limitations, we developed SupReMix, a supervised contrastive learning framework specifically designed for regression tasks that incorporates mixup techniques at the embedding level.

SupReMix addresses the regression representation challenge through two components. First, it implements anchor-inclusive mixtures as hard negatives and anchor-exclusive mixtures as hard positives, enhancing the continuity and local linearity of the learned representations. Second, it incorporates label distance information through distance magnifying weights, explicitly capturing the ordinal relationships inherent in regression tasks. Moreover, the theoretical analysis shows that SupReMix promotes globally ordered and locally linear representations, which align with the continuous nature of regression problems.

Experiments across six diverse medical imaging datasets (Section 4.1) confirm SupReMix's effectiveness, consistently outperforming both classification-based methods (SimCLR, SupCon) and regression-specific approaches (AdaCon, RNC) as shown in Section 5.2. On the RSNA bone age dataset, SupReMix achieved an MAE of 4.08 months compared to SupCon's 6.79 months—a 39.9\% improvement. Our analysis of representation continuity through Normalized Lipschitz Factor Distribution (Section 5.3) provides quantitative evidence that SupReMix learns smoother representations that better preserve ordinal relationships, with t-SNE visualizations confirming more coherent, ordered embeddings compared to classification-based methods.

The performance improvements demonstrated by SupReMix directly translate to enhanced clinical diagnostics across medical domains. For instance, the reduced error in bone age assessment enables more precise growth disorder diagnosis, while improved brain age prediction and cardiac function estimation support earlier neurological intervention and more accurate heart failure management, respectively. More importantly, SupReMix offers strong generalizability for cross-site deployment, a critical advantage in healthcare where model adaptation between different hospitals and imaging equipment often degrades performance. Our transfer learning experiments (Section 5.6) suggest that models pretrained with SupReMix maintain reliability when adapting across institutions with different imaging protocols and patient demographics. Additionally, SupReMix's effectiveness in low-resource settings (Section 5.5) addresses a fundamental challenge in clinical implementation, where smaller hospitals may have access to limited labeled cases but could still benefit from AI-assisted diagnostics for specialized tasks like brain age estimation or cardiac function assessment.

While SupReMix demonstrates significant advantages in the scenarios above, it is important to acknowledge areas where further develepment could be beneficial. The application of SupReMix faces limitations in its adaptability to higher-dimensional regression labels. This challenge stems from a fundamental issue: while ordinality is crucial in regression tasks, it is undefined for vectors in dimensions greater than one, as the topology on $\mathbb{R}^n$ does not form an order topology. In Mix-pos, obtaining a weights vector (whose dimension matches that of the label) by solving a linear system is theoretically feasible. However, this solution is not always assured due to potential linear independence between selected samples and the anchor. Moreover, the approach's scalability is hindered as the dimensionality of the regression label increases, posing significant practical challenges. Future research could focus on exploring methods to preserve ordinality in regression representations when dealing with higher-dimensional labels.

To conclude, in this paper, we propose \emph{Supervised Contrastive Learning for Medical Imaging Regression with Mixup (SupReMix)}, a novel framework that generates hard negatives and hard positives for supervised contrastive regression. Supported by theoretical analysis, SupReMix leads to continuous ordered representations for regression. Extensive experiments on six different medical imaging datasets have shown that SupReMix consistently improves over baselines, including vanilla deep regression, previous contrastive learning frameworks, and task-specific methods, across datasets, tasks, and input modalities. Beyond its core performance, SupReMix shows robust generalization when handling challenging scenarios like missing targets and few-shot cases, while also proving valuable as a pre-training strategy for existing architectures.

\bibliographystyle{elsarticle-num} 
\bibliography{main}

\begin{thebibliography}{10}
\expandafter\ifx\csname url\endcsname\relax
  \def\url#1{\texttt{#1}}\fi
\expandafter\ifx\csname urlprefix\endcsname\relax\def\urlprefix{URL }\fi
\expandafter\ifx\csname href\endcsname\relax
  \def\href#1#2{#2} \def\path#1{#1}\fi

\bibitem{peng2021accurate}
H.~Peng, W.~Gong, C.~F. Beckmann, A.~Vedaldi, S.~M. Smith, Accurate brain age prediction with lightweight deep neural networks, Medical image analysis 68 (2021) 101871.

\bibitem{dong2024brain}
Z.~Dong, R.~Li, Y.~Wu, T.~T. Nguyen, J.~Chong, F.~Ji, N.~Tong, C.~Chen, J.~H. Zhou, Brain-jepa: Brain dynamics foundation model with gradient positioning and spatiotemporal masking, Advances in Neural Information Processing Systems 37 (2024) 86048--86073.

\bibitem{halabi2019rsna}
S.~S. Halabi, L.~M. Prevedello, J.~Kalpathy-Cramer, A.~B. Mamonov, A.~Bilbily, M.~Cicero, I.~Pan, L.~A. Pereira, R.~T. Sousa, N.~Abdala, et~al., The rsna pediatric bone age machine learning challenge, Radiology 290~(2) (2019) 498--503.

\bibitem{escobar2019hand}
M.~Escobar, C.~Gonz{\'a}lez, F.~Torres, L.~Daza, G.~Triana, P.~Arbel{\'a}ez, Hand pose estimation for pediatric bone age assessment, in: Medical Image Computing and Computer Assisted Intervention--MICCAI 2019: 22nd International Conference, Shenzhen, China, October 13--17, 2019, Proceedings, Part VI 22, Springer, 2019, pp. 531--539.

\bibitem{ouyang2020video}
D.~Ouyang, B.~He, A.~Ghorbani, N.~Yuan, J.~Ebinger, C.~P. Langlotz, P.~A. Heidenreich, R.~A. Harrington, D.~H. Liang, E.~A. Ashley, et~al., Video-based ai for beat-to-beat assessment of cardiac function, Nature 580~(7802) (2020) 252--256.

\bibitem{pemberton2022quantification}
H.~G. Pemberton, L.~E. Collij, F.~Heeman, A.~Bollack, M.~Shekari, G.~Salvad{\'o}, I.~L. Alves, D.~V. Garcia, M.~Battle, C.~Buckley, et~al., Quantification of amyloid pet for future clinical use: a state-of-the-art review, European journal of nuclear medicine and molecular imaging 49~(10) (2022) 3508--3528.

\bibitem{hsieh2021automated}
C.-I. Hsieh, K.~Zheng, C.~Lin, L.~Mei, L.~Lu, W.~Li, F.-P. Chen, Y.~Wang, X.~Zhou, F.~Wang, et~al., Automated bone mineral density prediction and fracture risk assessment using plain radiographs via deep learning, Nature communications 12~(1) (2021) 5472.

\bibitem{chen2019self}
L.~Chen, P.~Bentley, K.~Mori, K.~Misawa, M.~Fujiwara, D.~Rueckert, Self-supervised learning for medical image analysis using image context restoration, Medical image analysis 58 (2019) 101539.

\bibitem{khosla2020supervised}
P.~Khosla, P.~Teterwak, C.~Wang, A.~Sarna, Y.~Tian, P.~Isola, A.~Maschinot, C.~Liu, D.~Krishnan, Supervised contrastive learning, Advances in neural information processing systems 33 (2020) 18661--18673.

\bibitem{ho2020contrastive}
C.-H. Ho, N.~Nvasconcelos, Contrastive learning with adversarial examples, Advances in Neural Information Processing Systems 33 (2020) 17081--17093.

\bibitem{robinson2020contrastive}
J.~Robinson, C.-Y. Chuang, S.~Sra, S.~Jegelka, Contrastive learning with hard negative samples, arXiv preprint arXiv:2010.04592 (2020).

\bibitem{kalantidis2020hard}
Y.~Kalantidis, M.~B. Sariyildiz, N.~Pion, P.~Weinzaepfel, D.~Larlus, Hard negative mixing for contrastive learning, Advances in Neural Information Processing Systems 33 (2020) 21798--21809.

\bibitem{wu2023synthetic}
Y.~Wu, Z.~Wang, D.~Zeng, Y.~Shi, J.~Hu, Synthetic data can also teach: Synthesizing effective data for unsupervised visual representation learning, in: Proceedings of the AAAI Conference on Artificial Intelligence, Vol.~37, 2023, pp. 2866--2874.

\bibitem{zhang2018mixup}
H.~Zhang, M.~Cisse, Y.~N. Dauphin, D.~Lopez-Paz, mixup: Beyond empirical risk minimization, in: International Conference on Learning Representations, 2018.

\bibitem{verma2019manifold}
V.~Verma, A.~Lamb, C.~Beckham, A.~Najafi, I.~Mitliagkas, D.~Lopez-Paz, Y.~Bengio, Manifold mixup: Better representations by interpolating hidden states, in: International conference on machine learning, PMLR, 2019, pp. 6438--6447.

\bibitem{shen2022mix}
Z.~Shen, Z.~Liu, Z.~Liu, M.~Savvides, T.~Darrell, E.~Xing, Un-mix: Rethinking image mixtures for unsupervised visual representation learning, in: Proceedings of the AAAI Conference on Artificial Intelligence, Vol.~36, 2022, pp. 2216--2224.

\bibitem{lee2020mix}
K.~Lee, Y.~Zhu, K.~Sohn, C.-L. Li, J.~Shin, H.~Lee, i-mix: A domain-agnostic strategy for contrastive representation learning, arXiv preprint arXiv:2010.08887 (2020).

\bibitem{liu2023harnessing}
Z.~Liu, S.~Li, G.~Wang, L.~Wu, C.~Tan, S.~Z. Li, Harnessing hard mixed samples with decoupled regularizer, in: Thirty-seventh Conference on Neural Information Processing Systems, 2023.

\bibitem{zha2023rank}
K.~Zha, P.~Cao, J.~Son, Y.~Yang, D.~Katabi, Rank-n-contrast: Learning continuous representations for regression, in: Thirty-seventh Conference on Neural Information Processing Systems, 2023.

\bibitem{dai2021adaptive}
W.~Dai, X.~Li, W.~H.~K. Chiu, M.~D. Kuo, K.-T. Cheng, Adaptive contrast for image regression in computer-aided disease assessment, IEEE Transactions on Medical Imaging 41~(5) (2021) 1255--1268.

\bibitem{chen2020simple}
T.~Chen, S.~Kornblith, M.~Norouzi, G.~Hinton, A simple framework for contrastive learning of visual representations, in: International conference on machine learning, PMLR, 2020, pp. 1597--1607.

\bibitem{he2020momentum}
K.~He, H.~Fan, Y.~Wu, S.~Xie, R.~Girshick, Momentum contrast for unsupervised visual representation learning, in: Proceedings of the IEEE/CVF conference on computer vision and pattern recognition, 2020, pp. 9729--9738.

\bibitem{chen2020improved}
X.~Chen, H.~Fan, R.~Girshick, K.~He, Improved baselines with momentum contrastive learning, arXiv preprint arXiv:2003.04297 (2020).

\bibitem{weinberger2009distance}
K.~Q. Weinberger, L.~K. Saul, Distance metric learning for large margin nearest neighbor classification., Journal of machine learning research 10~(2) (2009).

\bibitem{sohn2016improved}
K.~Sohn, Improved deep metric learning with multi-class n-pair loss objective, Advances in neural information processing systems 29 (2016).

\bibitem{dufumier2021conditional}
B.~Dufumier, P.~Gori, J.~Victor, A.~Grigis, E.~Duchesnay, Conditional alignment and uniformity for contrastive learning with continuous proxy labels, in: Med-NeurIPS-Workshop NeurIPS, 2021.

\bibitem{schneider2023learnable}
S.~Schneider, J.~H. Lee, M.~W. Mathis, Learnable latent embeddings for joint behavioural and neural analysis, Nature (2023) 1--9.

\bibitem{yu2021group}
X.~Yu, Y.~Rao, W.~Zhao, J.~Lu, J.~Zhou, Group-aware contrastive regression for action quality assessment, in: Proceedings of the IEEE/CVF international conference on computer vision, 2021, pp. 7919--7928.

\bibitem{wang2022contrastive}
Y.~Wang, Y.~Jiang, J.~Li, B.~Ni, W.~Dai, C.~Li, H.~Xiong, T.~Li, Contrastive regression for domain adaptation on gaze estimation, in: Proceedings of the IEEE/CVF Conference on Computer Vision and Pattern Recognition, 2022, pp. 19376--19385.

\bibitem{yao2022c}
H.~Yao, Y.~Wang, L.~Zhang, J.~Y. Zou, C.~Finn, C-mixup: Improving generalization in regression, Advances in Neural Information Processing Systems 35 (2022) 3361--3376.

\bibitem{schneider2023anchor}
N.~Schneider, S.~Goshtasbpour, F.~Perez-Cruz, Anchor data augmentation, arXiv preprint arXiv:2311.06965 (2023).

\bibitem{cole2017predicting}
J.~H. Cole, K.~Franke, Predicting age using neuroimaging: innovative brain ageing biomarkers, Trends in neurosciences 40~(12) (2017) 681--690.

\bibitem{smith2019estimation}
S.~M. Smith, D.~Vidaurre, F.~Alfaro-Almagro, T.~E. Nichols, K.~L. Miller, Estimation of brain age delta from brain imaging, Neuroimage 200 (2019) 528--539.

\bibitem{franke2019ten}
K.~Franke, C.~Gaser, Ten years of brainage as a neuroimaging biomarker of brain aging: what insights have we gained?, Frontiers in neurology 10 (2019) 789.

\bibitem{muhtaseb2022echocotr}
R.~Muhtaseb, M.~Yaqub, Echocotr: Estimation of the left ventricular ejection fraction from spatiotemporal echocardiography, in: International Conference on Medical Image Computing and Computer-Assisted Intervention, Springer, 2022, pp. 370--379.

\bibitem{mokhtari2022echognn}
M.~Mokhtari, T.~Tsang, P.~Abolmaesumi, R.~Liao, Echognn: explainable ejection fraction estimation with graph neural networks, in: International Conference on Medical Image Computing and Computer-Assisted Intervention, Springer, 2022, pp. 360--369.

\bibitem{bartlett2002rademacher}
P.~L. Bartlett, S.~Mendelson, Rademacher and gaussian complexities: Risk bounds and structural results, Journal of Machine Learning Research 3~(Nov) (2002) 463--482.

\bibitem{miller2016multimodal}
K.~L. Miller, F.~Alfaro-Almagro, N.~K. Bangerter, D.~L. Thomas, E.~Yacoub, J.~Xu, A.~J. Bartsch, S.~Jbabdi, S.~N. Sotiropoulos, J.~L. Andersson, et~al., Multimodal population brain imaging in the uk biobank prospective epidemiological study, Nature neuroscience 19~(11) (2016) 1523--1536.

\bibitem{tisdall2012volumetric}
M.~D. Tisdall, A.~T. Hess, M.~Reuter, E.~M. Meintjes, B.~Fischl, A.~J. van~der Kouwe, Volumetric navigators for prospective motion correction and selective reacquisition in neuroanatomical mri, Magnetic resonance in medicine 68~(2) (2012) 389--399.

\bibitem{sapra2009anti}
M.~Sapra, K.~Y. Kim, Anti-amyloid treatments in alzheimer's disease, Recent Patents on CNS Drug Discovery (Discontinued) 4~(2) (2009) 143--148.

\bibitem{ALFAROALMAGRO2018400}
Image processing and quality control for the first 10,000 brain imaging datasets from uk biobank, NeuroImage 166 (2018) 400--424.
\newblock \href {https://doi.org/https://doi.org/10.1016/j.neuroimage.2017.10.034} {\path{doi:https://doi.org/10.1016/j.neuroimage.2017.10.034}}.

\bibitem{li2019global}
J.~Li, R.~Kong, R.~Li{\'e}geois, C.~Orban, Y.~Tan, N.~Sun, A.~J. Holmes, M.~R. Sabuncu, T.~Ge, B.~T. Yeo, Global signal regression strengthens association between resting-state functional connectivity and behavior, NeuroImage 196 (2019) 126--141.

\bibitem{schaefer2018local}
A.~Schaefer, R.~Kong, E.~M. Gordon, T.~O. Laumann, X.-N. Zuo, A.~J. Holmes, S.~B. Eickhoff, B.~T. Yeo, Local-global parcellation of the human cerebral cortex from intrinsic functional connectivity mri, Cerebral cortex 28~(9) (2018) 3095--3114.

\bibitem{taleb20203d}
A.~Taleb, W.~Loetzsch, N.~Danz, J.~Severin, T.~Gaertner, B.~Bergner, C.~Lippert, 3d self-supervised methods for medical imaging, Advances in neural information processing systems 33 (2020) 18158--18172.

\bibitem{hara2018can}
K.~Hara, H.~Kataoka, Y.~Satoh, Can spatiotemporal 3d cnns retrace the history of 2d cnns and imagenet?, in: Proceedings of the IEEE conference on Computer Vision and Pattern Recognition, 2018, pp. 6546--6555.

\bibitem{he2021multi}
S.~He, D.~Pereira, J.~D. Perez, R.~L. Gollub, S.~N. Murphy, S.~Prabhu, R.~Pienaar, R.~L. Robertson, P.~E. Grant, Y.~Ou, Multi-channel attention-fusion neural network for brain age estimation: Accuracy, generality, and interpretation with 16,705 healthy mris across lifespan, Medical Image Analysis 72 (2021) 102091.

\bibitem{he2021global}
S.~He, P.~E. Grant, Y.~Ou, Global-local transformer for brain age estimation, IEEE transactions on medical imaging 41~(1) (2021) 213--224.

\bibitem{hong2020holmes}
S.~Hong, Y.~Xu, A.~Khare, S.~Priambada, K.~Maher, A.~Aljiffry, J.~Sun, A.~Tumanov, Holmes: Health online model ensemble serving for deep learning models in intensive care units, in: Proceedings of the 26th ACM SIGKDD International Conference on Knowledge Discovery \& Data Mining, 2020, pp. 1614--1624.

\bibitem{lea2017temporal}
C.~Lea, M.~D. Flynn, R.~Vidal, A.~Reiter, G.~D. Hager, Temporal convolutional networks for action segmentation and detection, in: proceedings of the IEEE Conference on Computer Vision and Pattern Recognition, 2017, pp. 156--165.

\bibitem{caro2023brainlm}
J.~O. Caro, A.~H. d.~O. Fonseca, C.~Averill, S.~A. Rizvi, M.~Rosati, J.~L. Cross, P.~Mittal, E.~Zappala, D.~Levine, R.~M. Dhodapkar, et~al., Brainlm: A foundation model for brain activity recordings, bioRxiv (2023) 2023--09.

\bibitem{liu2020self}
C.~Liu, H.~Xie, Y.~Zhang, Self-supervised attention mechanism for pediatric bone age assessment with efficient weak annotation, IEEE Transactions on Medical Imaging 40~(10) (2020) 2685--2697.

\bibitem{chen2021doctor}
J.~Chen, B.~Yu, B.~Lei, R.~Feng, D.~Z. Chen, J.~Wu, Doctor imitator: Hand-radiography-based bone age assessment by imitating scoring methods, arXiv preprint arXiv:2102.05424 (2021).

\bibitem{gonzalez2020simba}
C.~Gonz{\'a}lez, M.~Escobar, L.~Daza, F.~Torres, G.~Triana, P.~Arbel{\'a}ez, Simba: Specific identity markers for bone age assessment, in: Medical Image Computing and Computer Assisted Intervention--MICCAI 2020: 23rd International Conference, Lima, Peru, October 4--8, 2020, Proceedings, Part VI 23, Springer, 2020, pp. 753--763.

\bibitem{yang2023multi}
Z.~Yang, C.~Cong, M.~Pagnucco, Y.~Song, Multi-scale multi-reception attention network for bone age assessment in x-ray images, Neural Networks 158 (2023) 249--257.

\bibitem{huang2017densely}
G.~Huang, Z.~Liu, L.~Van Der~Maaten, K.~Q. Weinberger, Densely connected convolutional networks, in: Proceedings of the IEEE conference on computer vision and pattern recognition, 2017, pp. 4700--4708.

\bibitem{he_resnet}
K.~He, X.~Zhang, S.~Ren, J.~Sun, Deep residual learning for image recognition, in: 2016 IEEE Conference on Computer Vision and Pattern Recognition (CVPR), 2016, pp. 770--778.
\newblock \href {https://doi.org/10.1109/CVPR.2016.90} {\path{doi:10.1109/CVPR.2016.90}}.

\bibitem{tran2018closer}
D.~Tran, H.~Wang, L.~Torresani, J.~Ray, Y.~LeCun, M.~Paluri, A closer look at spatiotemporal convolutions for action recognition, in: Proceedings of the IEEE conference on Computer Vision and Pattern Recognition, 2018, pp. 6450--6459.

\bibitem{kingma2014adam}
D.~P. Kingma, J.~Ba, Adam: A method for stochastic optimization, arXiv preprint arXiv:1412.6980 (2014).

\bibitem{pmlr-v139-yang21m}
Y.~Yang, K.~Zha, Y.~Chen, H.~Wang, D.~Katabi, \href{https://proceedings.mlr.press/v139/yang21m.html}{Delving into deep imbalanced regression}, in: M.~Meila, T.~Zhang (Eds.), Proceedings of the 38th International Conference on Machine Learning, Vol. 139 of Proceedings of Machine Learning Research, PMLR, 2021, pp. 11842--11851.
\newline\urlprefix\url{https://proceedings.mlr.press/v139/yang21m.html}

\bibitem{cheng2023weakly}
X.~Cheng, Y.~Cao, X.~Li, B.~An, L.~Feng, Weakly supervised regression with interval targets, arXiv preprint arXiv:2306.10458 (2023).

\bibitem{JMLR:v9:vandermaaten08a}
L.~van~der Maaten, G.~Hinton, \href{http://jmlr.org/papers/v9/vandermaaten08a.html}{Visualizing data using t-sne}, Journal of Machine Learning Research 9~(86) (2008) 2579--2605.
\newline\urlprefix\url{http://jmlr.org/papers/v9/vandermaaten08a.html}

\bibitem{van2008visualizing}
L.~Van~der Maaten, G.~Hinton, Visualizing data using t-sne., Journal of machine learning research 9~(11) (2008).

\bibitem{tang2024understanding}
E.~Tang, B.~Yang, X.~Song, Understanding llm embeddings for regression, arXiv preprint arXiv:2411.14708 (2024).

\bibitem{zhou2023deep}
S.~K. Zhou, H.~Greenspan, D.~Shen, Deep learning for medical image analysis, Academic Press, 2023.

\bibitem{niu2016ordinal}
Z.~Niu, M.~Zhou, L.~Wang, X.~Gao, G.~Hua, Ordinal regression with multiple output cnn for age estimation, in: Proceedings of the IEEE conference on computer vision and pattern recognition, 2016, pp. 4920--4928.

\bibitem{zhang2022improving}
S.~Zhang, L.~Yang, M.~B. Mi, X.~Zheng, A.~Yao, Improving deep regression with ordinal entropy, in: The Eleventh International Conference on Learning Representations, 2022.

\bibitem{he2022masked}
K.~He, X.~Chen, S.~Xie, Y.~Li, P.~Doll{\'a}r, R.~Girshick, Masked autoencoders are scalable vision learners, in: Proceedings of the IEEE/CVF conference on computer vision and pattern recognition, 2022, pp. 16000--16009.

\end{thebibliography}
\newpage

\appendix
\newgeometry{left=0.7in,right=0.7in,top=1in,bottom=1in}
\section{Proof}

In this appendix, $z_{m,i}$ is used to denote normalized embeddings, whereas $z^*_{m,i}$ is used to denote unnormalized embeddings, in other words, we have $z_{m,i} = \frac{z^*_{m,i}}{||z_{m,i}||}$.
\label{proof}

\subsection{Proof of Theorem~\ref{th1}}
\begin{proof}
For $m\neq m'$, we have
$$
\frac{\partial \mathcal{L}}{\partial s^{m,m'}_{i,j}} = \frac{b_{m,i}\cdot w_{m,m'}\exp(\langle z_{m,i}, \ z_{m',j}\rangle/\tau)}{C^w_{m,i}\cdot \tau} > 0
$$

where
$$
C^w_{m,i} : = \sum_{n\in \overline{M}}\sideset{}{'}\sum_{l=1}^{k_{(m,i),n}}w_{m,n}\exp(\langle z_{m,i}, \ z_{n,l} \rangle /\tau), \ b_{m,i}=\frac{k_{(m,i),m}-1}{k_m}.
$$

Consider $w_{m,m'}=\frac{1+t|m-m'|}{m_{\text{max}}-m_{\text{min}}}$. For comparison, we have
\begin{equation}
\begin{split}
\frac{\frac{\partial \mathcal{L}}{\partial s^{m,m'}_{i,j}}}{\frac{\partial \mathcal{L}}{\partial s^{m,m''}_{i,l}}} & = \frac{b_{m,i} \cdot w_{m,m'}\exp(\langle z_{m,i},\ z_{m',j}\rangle/\tau)}{C^w_{m,i}\cdot \tau}\cdot\frac{C^w_{m,i}\cdot \tau}{b_{m,i} \cdot w_{m,m''}\exp(\langle z_{m,i},\ z_{m'',l}\rangle/\tau)} \\
& = \frac{w_{m,m'}}{w_{m,m''}}\cdot\exp(z_{m,i}\cdot(z_{m',j}-z_{m'',l})/\tau)\\
& = \frac{1+t|m-m'|}{1+t|m-m''|}\cdot \exp(z_{m,i}\cdot(z_{m',j}-z_{m'',l})/\tau),
\end{split}
\end{equation}
Then we have
$$
\left.\frac{\frac{\partial \mathcal{L}}{\partial s^{m,m'}_{i,j}}}{\frac{\partial \mathcal{L}}{\partial s^{m,m''}_{i,l}}}\right|_{t=1}>\left.\frac{\frac{\partial \mathcal{L}}{\partial s^{m,m'}_{i,j}}}{\frac{\partial \mathcal{L}}{\partial s^{m,m''}_{i,l}}}\right|_{t=0}
$$

\end{proof}

\subsection{Proof of Lemma~\ref{le1}}

\begin{proof}
Recall we have 
$$
            \mathcal{L}_{\text{SupReMix}} = \sum_{m\in M}\frac{1}{k_m}\sum_{i=1}^{k_m}\sum_{\substack{j=1\\j\neq i}}^{k_{(m,i),m}}\log\frac{\sum_{\overline{m}\in \overline{M}}\sum_{l=1}^{\prime k_{(m,i),\overline{m}}}
            w_{m,\overline{m}}\exp(\langle z_{m,i},\ z_{\overline{m},l} \rangle /\tau)}{\langle \exp(z_{m,i}, z_{m,j} \rangle/\tau)}
$$

Since the logarithmic function is monotone, and both weight and exponential function are positive, we keep only the positive pairs in the numerators and have

\begin{equation}\label{ineq1}
\begin{split}
    \mathcal{L}_{\text{SupReMix}}  & \geq \sum_{m\in M}\frac{1}{k_m}\sum_{i=1}^{k_m}\sum_{\substack{j=1\\j\neq i}}^{k_{(m,i),m}}\log\frac{\sum_{\substack{l=1\\l\neq i}}^{ k_{(m,i),m}}
            \frac{1}{m_{\text{max}}-m_{\text{min}}}\exp(\langle z_{m,i},\ z_{m,l}\rangle /\tau)}{\exp(\langle z_{m,i},\ z_{m,j}\rangle /\tau)}\\
            & = -\sum_{m\in M}\frac{1}{k_m}\sum_{i=1}^{k_m}\sum_{\substack{j=1\\j\neq i}}^{k_{(m,i),m}}\log\frac{\exp(\langle z_{m,i},\ z_{m,j}\rangle /\tau)}{\sum_{\substack{l=1\\l\neq i}}^{ k_{(m,i),m}}
            \frac{1}{m_{\text{max}}-m_{\text{min}}}\exp(\langle z_{m,i},\ z_{m,l}\rangle /\tau)}\\
            &= -\sum_{m\in M}\frac{1}{k_m}\sum_{i=1}^{k_m}\sum_{\substack{j=1\\j\neq i}}^{k_{(m,i),m}}\log\frac{\frac{1}{m_{\text{max}}-m_{\text{min}}}\exp(\langle z_{m,i},\ z_{m,j}\rangle/\tau)}{\sum_{\substack{l=1\\l\neq i}}^{ k_{(m,i),m}}
            \frac{1}{m_{\text{max}}-m_{\text{min}}}\exp(\langle z_{m,i},\ z_{m,l}\rangle /\tau)}+C,
\end{split}
\end{equation}
where $C = -\log(m_{\text{max}}-m_{\text{min}})\cdot\sum_{m\in M}\frac{1}{k_m}\sum_{i=1}^{k_m}(k_{(m,i),m}-1)$ is a constant. Now since $-\log$ is convex, by Jensen's inequality, we have

\begin{equation}\label{ineq2}
\begin{split}
\mathcal{L}_{\text{SupReMix}} & = -\sum_{m\in M}\frac{1}{k_m}\sum_{i=1}^{k_m}(k_{(m,i),m}-1)\sum_{\substack{j=1\\j\neq i}}^{k_{(m,i,m}}\frac{1}{(k_{(m,i),m}-1)}\log\frac{\frac{1}{m_{\text{max}}-m_{\text{min}}}\exp(\langle z_{m,i},\ z_{m,j}\rangle /\tau)}{\sum_{\substack{l=1\\l\neq i}}^{ k_{(m,i),m}}
            \frac{1}{m_{\text{max}}-m_{\text{min}}}\exp(\langle z_{m,i}, z_{m,l}\rangle /\tau)}+C\\
& \geq -\sum_{m\in M}\frac{1}{k_m}\sum_{i=1}^{k_m}(k_{(m,i),m}-1)\log\left(\frac{\sum_{\substack{l=1\\l\neq i}}^{ k_{(m,i),m}}
            \frac{1}{m_{\text{max}}-m_{\text{min}}}\exp(\langle z_{m,i},\ z_{m,l}\rangle /\tau)}{( k_{(m,i),m}-1)\sum_{\substack{l=1\\l\neq i}}^{ k_{(m,i),m}}
            \frac{1}{m_{\text{max}}-m_{\text{min}}}\exp(\langle z_{m,i}, z_{m,l}\rangle /\tau)}\right)+C\\
            & = \sum_{m\in M}\sum_{i=1}^{k_m}\frac{(k_{(m,i),m}-1)\log(k_{(m,i),m}-1)}{k_m}+C\\
            & = \sum_{m\in M}\frac{1}{k_m}\sum_{i=1}^{k_m}(k_{(m,i),m}-1)\log\frac{k_{(m,i),m}-1}{m_{\text{max}}-m_{\text{min}}}=:\mathcal{L}^*
\end{split}
\end{equation}

\end{proof}

\subsection{Proof of Theorem~\ref{th2}}

\begin{proof}
    Let $\epsilon, \delta > 0$ be given. We aim to demonstrate the existence of a positive real number $\tau_0$ such that for all $\tau > \tau_0$, embeddings with $\mathcal{L} < \mathcal{L}^* + \epsilon$ can be obtained with a probability exceeding $1 - \delta$.

    Firstly, we assign identical embedding $z_m$ to all samples labeled with $m$ belonging to the set $M$. Subsequently, we set these embeddings, denoted as $z_m$, to lie on a common plane. Without loss of generality, we can assume this plane to be spanned by the first two coordinate axes, thereby effectively reducing the embedding space to two dimensions. 

    Consider $z_m^*$ as the anchor embedding before normalization. We further take the $z_m^*$ to lie on a line such that 
    their position on the line is proportional to their label, i.e.,
$$
\frac{m-m'}{m-m''} = \frac{||z^*_m-z^*_{m'}||}{||z^*_m-z^*_{m''}||}, \quad \forall m \neq m' \neq m''\in M
$$
    then the Mix-pos associated with label $m$ will share the same embedding, $z_m$. When it comes to Mix-neg, note that for any sample $(m,i)$, only a finite number of Mix-neg exist. We define an event $E_{m,i}$ to occur if at least one Mix-neg of $(m,i)$ forms an angle smaller than $\theta_{m,i}$ with the anchor $(m,i)$. We choose $\theta_{m,i}$ such that the event $E_{m,i}$ has a probability less than $\frac{\delta}{N}$. This ensures that the cumulative probability of all $E_{m,i}$ events being false surpasses $1 - \delta$. Define $\theta_0$ to be the minimum of $\theta_{m,i}$ over all pairs $(m,i)$ in set $I$ and the angles between all pairs of $z_{m},z_{m'}$, represented mathematically as:
    \begin{equation}
        \theta_0:=\min\{\min_{(m,i)\in I}\{\theta_{m,i}\},\min_{m,m'\in M}\arccos(z_{m}\cdot z_{m'})\}
    \end{equation}
    Upon normalization, any pair of distinct embeddings will maintain an angular separation of at least $\theta_0$. Then we have

    \begin{equation}\label{ineq3}
\begin{split}
            \mathcal{L}_{\text{SupReMix}} = &   \sum_{m\in M}\frac{1}{k_m}\sum_{i=1}^{k_m}\sum_{\substack{j=1\\j\neq i}}^{k_{(m,i),m}}\log\left(\frac{\sum_{\overline{m}\neq m}\sum_{l=1}^{\prime k_{(m,i),\overline{m}}}
            w_{m,\overline{m}}\exp(\langle z_{m,i},\ z_{\overline{m},l}\rangle/\tau)}{\exp(1/\tau)}\right.\\
            & +  \left.\frac{\sum_{\substack{l=1\\l\neq i}}^{ k_{(m,i),m}}\frac{1}{m_{\text{max}}-m_{\text{min}}}\exp(1/\tau)}{\exp(1/\tau)}\right)\\
            < &  \sum_{m\in M}\frac{1}{k_m}\sum_{i=1}^{k_m}\sum_{\substack{j=1\\j\neq i}}^{k_{(m,i),m}}\log\left(\frac{\sum_{\overline{m}\neq m}\sum_{l=1}^{ k_{(m,i),\overline{m}}}\exp(\cos(\theta_0)/\tau)+\frac{(k_{(m,i),m}-1)\exp(1/\tau)}{m_{\text{max}}-m_{\text{min}}}}{\exp(1/\tau)}\right)\\
            = & \mathcal{L}^*+ \sum_{m\in M}\frac{1}{k_m}\sum_{i=1}^{k_m}\sum_{\substack{j=1\\j\neq i}}^{k_{(m,i),m}}\log\left(1+\frac{(m_{\text{max}}-m_{\text{min}})\sum_{\overline{m}\neq m}\sum_{l=1}^{ k_{(m,i),\overline{m}}}\exp((\cos(\theta_0)-1)/\tau)}{(k_{(m,i),m}-1)}\right)\\
         \end{split}
         \end{equation}

Now since $\tau \rightarrow \infty$, we have the log functions in the second term converge to $0$, then for any $\epsilon>0$ there exists $\tau_0>0$, for all $\tau>\tau_0$, we have

\begin{equation}
\sum_{m\in M}\frac{1}{k_m}\sum_{i=1}^{k_m}\sum_{\substack{j=1\\j\neq i}}^{k_{(m,i),m}}\log\left(1+\frac{(m_{\text{max}}-m_{\text{min}})\sum_{\overline{m}\neq m}\sum_{l=1}^{ k_{(m,i),\overline{m}}}\exp((\cos(\theta_0)-1)/\tau)}{(k_{(m,i),m}-1)}\right)<\epsilon,
\end{equation}
therefore, we have
$$
\mathcal{L}_{\text{SupReMix}}<\mathcal{L}^*+\epsilon.
$$

 Based on the above and the proof of lemma 1, we know that the loss function is closed to its infimum if, and only if, the following are true: 
\begin{enumerate}
     \item all the real samples with the same label $m$ are embedded close to some vector $z_m$;
     \item all Mix-pos of an anchor $(m,i)$ have embeddings close to $z_m$;
     \item all the negatives (real and Mix-neg) of an anchor $(m,i)$ have $z_{m,i}\cdot z_{m',j}$ that are not equal to $1$.
\end{enumerate}
Conditions 1 and 2 are both necessary and sufficient for the inequality in \eqref{ineq2} to closely approach equality. On the other hand, condition 3, specified as \(\cos{\theta_0} \neq 1\), is the necessary and sufficient condition for inequality in \eqref{ineq3} — an angular version of \eqref{ineq2} — to similarly approach equality.
 Moreover, condition 2 is true when for any anchor $(m,i)$, all the samples with label $m'$ such that $m-\epsilon<m'<m+\epsilon$ are closed to a line, and their positions on the line are proportional to their labels. In other words, they are locally ordered and linear. Finally, condition 3 holds when the negative pairs are apart from each other, combined with condition 2 shows $z_m$ are globally ordered. Therefore, we can see the loss function approaches its infimum when the embeddings are globally ordered and locally linear.
\end{proof}

\subsection{Risk Bound Analysis}

In this section, we analyze the generalization bound from SupReMix following \citep{zha2023rank}. 

Given a regression task with training set $S = \{ (x_i, y_i) \}_{i=1}^{N}$, let $H_1$ be the hypothesis set containing all possible mapping functions from $x_i$ to $y_i$, $f$ be an encoder mapping $x_i$ to $z_i$, $g(z_i)=y_i$. An embedding set is called "$\epsilon-\text{ordered}$" if, for any label value $m$, there exists embedding $z_m$ such that any $x_i$ with $y_i=m$, we have $|z_i-z_m|<\epsilon$, and for any $x_i$, $x_j$ with $y_i\neq y_j$ we have $|z^T_i\cdot z_j|<1-\epsilon$. With SupReMix, $f$ is guaranteed to map $x$ to $\epsilon-\text{ordered}$ set. We denote the class of all possible $h$ with $f$ that could lead to $\epsilon-\text{ordered}$ set by $H_2$. Both $H_1$ and $H_2$ contain the optimal hypothesis $h^{*}$ such that for any $x,y$, $h^{*}(x)=y$.

Denote $u_i$ as the upper bound of the loss from $(x_i,y_i)$, the set of loss from the training set as $A_k$ for each hypothesis in a hypothesis set $H_k$, with Rademacher Complexity \cite{bartlett2002rademacher} $R(A_k)$. The gap between training error and test error is upper bounded by $2R(A_i)+4u_i\sqrt{2\ln(4/\delta)/N}$, with probability $1-\delta$. Since $H_2 \subset H_1$, we have $A_2 \subset A_1$ and $u_2 \leq u_1$. By the monotonicity of Rademacher Complexity we have $R(A_2)\leq R(A_1)$.

\newpage

\section{Dataset}

\subsection{Dataset Details}
\label{B1}
\vspace{3cm}
\begin{table}[h!]
    \vspace{-3cm}
    \centering
    \caption{Overview of the six medical imaging datasets used in our experiments}
    \resizebox{1.0\textwidth}{!}{%
    \begin{tabular}{|c|c|c|c|c|c|c|}
    \hline Dataset & Target type & Target range & \# Training set & \# Val. set & \# Test set & Modality \\
    \hline \hline
    UK Biobank & Brain age & 42-82 years& 19,509 & 2,434 & 2,431 & MRI T1\\
    \hline
    HCP-Lifespan & Brain age & 36-100 years& 456 & 100 & 100 & Rs-fMRI \\
    \hline
    RSNA & Bone age & 1-228 months & 11,611 & 1,000 & 200 & X-ray \\
    \hline
    RHPE & Bone age & 10-242 months & 5,496 & 716 & 80 & X-ray \\
    \hline
    Echo-Net & Ejection fraction & 6.91-96.96 \%& 7,465 & 1,288 & 1,277 & Echocardiogram \\
    \hline
    A4 & Amyloid SUVR & 0.45-2.58 g/m& 3,486 & 500 & 500 & PET \\
    \hline
    \end{tabular}
    }
\end{table}

\subsubsection{UK Biobank}
UK Biobank is a large-scale prospective epidemiological study containing comprehensive health and medical data from over 500,000 participants across United Kingdom. From this rich database, we utilize the T1-weighted structural MRI scans collected from 19,509 participants aged 42-82 years to evaluate brain age prediction. Our preprocessing pipeline begins with careful quality control to exclude subjects with current/past stroke, cancer, long standing illness and poor health rating. The original non-skull-stripped T1-weighted images, initially at 1×1×1 mm³ resolution, are resampled to 2×2×2 mm³ to optimize computational efficiency while preserving anatomical detail. We perform brain extraction using FSL BET with a fractional intensity threshold of 0.5, followed by bias field correction using the N4 algorithm. The images are then linearly registered to MNI152 space using FSL FLIRT with 12 degrees of freedom. To standardize the input size, we apply center cropping to achieve dimensions of 100 x 100 x 100, followed by intensity normalization to zero mean and unit variance. These preprocessing steps follow the established UK Biobank pipeline described in Alfaro-Almagro et al. (2018).

\subsubsection{HCP-Lifespan}
The HCP-Lifespan dataset provides resting state fMRI data designed to study the aging process from middle age to older adulthood. Our preprocessing approach implements a comprehensive pipeline for the fMRI data. We begin with motion correction using FSL MCFLIRT, followed by slice timing correction through FSL slicetimer. Global signal regression is performed following the methodology of Li et al. (2019). The data undergoes temporal filtering with a 0.01-0.1 Hz bandpass filter and spatial smoothing using a 6mm FWHM Gaussian kernel. We then apply parcellation using the Schaefer-400 atlas, resulting in 400 regions of interest (ROIs). Time series are extracted from each ROI, producing final data dimensions of [400 (parcells) × 478 (time frames)]. Quality control measures include the exclusion of subjects with excessive head motion (mean FD $>$ 0.3mm) or incomplete scans.

\subsubsection{RSNA \& RHPE}
For bone age assessment, we utilize two distinct X-ray datasets: the RSNA Bone Age Challenge dataset and the RHPE dataset. For the RSNA dataset, preprocessing begins with converting DICOM images to PNG format, followed by contrast enhancement using histogram equalization. Images are then resized to 520×400 pixels using bilinear interpolation. During training, we employ data augmentation techniques including random rotation within ±10 degrees, random horizontal flipping, brightness and contrast adjustments, and random cropping with padding. Finally, intensity values are normalized to the range [0,1].

The RHPE dataset requires additional preprocessing steps due to its dual hand radiographs. We first isolate the left hand by splitting the original images, then apply ground truth bounding boxes to crop the hand region. Contrast enhancement is performed using CLAHE (Contrast Limited Adaptive Histogram Equalization). To maintain consistency with the RSNA dataset, images are resized to 520×400 pixels. We apply similar data augmentation strategies as with RSNA, with additional background standardization to reduce variability across images.

\subsubsection{Echo-Net}
The Echo-Net dataset preprocessing begins with converting DICOM videos to image sequences and carefully removing ECG traces and text overlays through mask-based filtering. We extract frames at 30fps and rescale all frames to 112×112 pixels, followed by intensity normalization to zero mean and unit variance. The temporal preprocessing focuses on selecting frames that cover a complete cardiac cycle, with temporal resampling applied to standardize sequence length. During training, we implement frame jitter augmentation to enhance model robustness. Our quality control process is thorough, removing studies with poor image quality, verifying correct cardiac view (apical-4-chamber), and confirming complete cardiac cycle coverage. These steps ensure consistent, high-quality data for training and evaluation.

\subsubsection{A4}
The A4 study PET data preprocessing involves multiple stages optimized for amyloid burden quantification. The dynamic PET acquisition consists of four 5-minute frames collected between 50-70 minutes post-injection using [18F]-Florbetapir (FBP) tracer. Motion correction is performed through frame-to-frame realignment and mean image calculation for each subject. Spatial normalization includes registration to the MNI152 template and application of standard space transformations.

For SUVR calculation, we use the whole cerebellum as the reference region and extract measurements from six key cortical regions: medial orbital frontal, temporal, parietal, anterior cingulate, posterior cingulate, and precuneus. The resulting values are normalized to a range of 0.45 to 2.58. Our quality control process involves excluding scans with excessive motion, verifying complete brain coverage, and checking for artifacts and signal abnormalities. All SUVR calculations and regional measurements follow standard processing pipelines for amyloid PET quantification. We employ a predefined amyloid positivity (A$\beta$+) cutoff of $>$ 1.15 for classifying amyloid burden status.

\subsection{Ethic Statements}
\label{B2}
All datasets used in our experiments are publicly available and have been properly de-identified to protect patient privacy. Access to the UK Biobank data is available through a formal application process via their website (https://www.ukbiobank.ac.uk/), with all subject information thoroughly de-identified. The HCP-Lifespan dataset can be accessed through the Human Connectome Project website (https://www.humanconnectome.org/lifespan-studies), providing resting-state fMRI data that has been preprocessed to remove any identifying information. The RSNA Bone Age and RHPE datasets are publicly available through their respective challenges, containing only anonymized hand radiographs. The Echo-Net dataset is accessible through the Stanford Digital Repository, featuring anonymized echocardiogram videos. The A4 study data can be obtained through the Laboratory of Neuro Imaging (LONI) platform after completing appropriate data use agreements, with all PET scans being fully de-identified before distribution. These measures ensure compliance with ethical guidelines while facilitating reproducible research.

\section{Experimental Settings}

\subsection{Implementation Details}
\label{C1}

\begin{table}[htbp]
    \centering
    \caption{Detailed hyperparameters and implementation settings across datasets}
    \resizebox{\textwidth}{!}{%
    \begin{tabular}{@{}lcccccc@{}}
    \toprule
    & UK Biobank & HCP-Lifespan & RSNA & RHPE & Echo-Net & A4 \\
    \midrule
    Base encoder & 3D ResNet-18 & 1D ResNet-18 & ResNet-50 & ResNet-50 & r2plus1d-18 & 3D ResNet-18 \\
    Feature dim & 128 & 128 & 128 & 128 & 128 & 128 \\
    Batch size & 64 & 64 & 128 & 128 & 64 & 32 \\
    Learning rate & 1e-3 & 1e-3 & 1e-3 & 1e-3 & 1e-3 & 1e-3 \\
    Weight decay & 1e-4 & 1e-4 & 1e-4 & 1e-4 & 1e-4 & 1e-4 \\
    Optimizer & Adam & Adam & Adam & Adam & Adam & Adam \\
    Temperature ($\tau$) & 0.5 & 0.5 & 1.0 & 1.0 & 1.0 & 1.0 \\
    Beta dist.\ params & (2,8) & (2,8) & (2,8) & (2,8) & (2,8) & (2,8) \\
    Window size ($\gamma$) & 3 & 5 & 5 & 5 & 7 & 7 \\
    Input size & 100$\times$100$\times$100 & 400$\times$T & 520$\times$400 & 520$\times$400 & 112$\times$112$\times$T & 100$\times$100$\times$100 \\
    Training epochs & 200 & 200 & 200 & 200 & 200 & 200 \\
    \bottomrule
    \end{tabular}%
    }
\end{table}

\subsubsection{Network Architecture Details}

\paragraph{3D ResNet-18 (UK Biobank, A4):}
The 3D ResNet-18 architecture was modified specifically for volumetric medical images. The initial convolution layer uses a kernel size of 7×7×7 with stride 2 and 64 output channels. This is followed by a max pooling layer with kernel 3×3×3 and stride 2. The network contains four residual blocks with channel dimensions progressing through [64,128,256,512]. Each convolutional layer is followed by BatchNorm3d and ReLU activation. The final features are processed by average pooling before entering the projection head. The projection head architecture consists of three layers: a linear transformation from 512 to 2048 dimensions, ReLU activation, and a final linear layer reducing to 128 dimensions.

\paragraph{1D ResNet-18 (HCP-Lifespan):}
For time-series fMRI data processing, we implemented a customized 1D ResNet-18 that handles 400 ROIs. The architecture begins with an initial 1D convolution using kernel size 7 and stride 2, transforming the input from 400 channels to 64 feature channels. Four temporal residual blocks process the time dimension while maintaining the ROI structure. The network employs adaptive average pooling to accommodate variable sequence lengths. The projection head maintains consistency with other architectures, using identical dimensions of 2048-128.

\paragraph{ResNet-50 (RSNA, RHPE):}
Our implementation of ResNet-50 was adapted for grayscale medical images with several key modifications. The input convolution layer was adjusted to accept single-channel images while maintaining the standard bottleneck blocks in a [3,4,6,3] layer configuration. The channel dimensions progress through 64→256→512→1024→2048. We implemented additional padding in the first layer to properly handle the high-resolution radiographs, and adjusted stride patterns throughout the network to maintain appropriate feature map resolution. The network concludes with global average pooling before the projection head.

\paragraph{r2plus1d-18 (Echo-Net):}
The r2plus1d-18 architecture was specifically configured for echocardiogram video analysis. The network decomposes 3D convolutions into separate spatial (2D) and temporal (1D) components. The initial block combines a spatial 7×7 convolution with a temporal 3×1 convolution, followed by 3D max pooling. Four main blocks follow with channel dimensions of [64,128,256,512]. The network incorporates spatio-temporal pooling to handle variable-length sequences, with temporal stride parameters optimized for 30fps input processing.

\subsubsection{Training Protocol Details}

\paragraph{Optimization:}
Our implementation uses the Adam optimizer with beta1 set to 0.9, beta2 to 0.999, and epsilon to 1e-8. The learning rate schedule incorporates a warmup period over the first 10 epochs, followed by cosine decay according to the formula $\eta_t = \eta_{min} + \frac{1}{2}(\eta_{max} - \eta_{min})(1 + \cos(\frac{t\pi}{T}))$. We apply gradient clipping with a maximum norm of 1.0 and weight decay of 1e-4 to all parameters except bias terms and BatchNorm layers. For multi-GPU training, we utilize synchronized BatchNorm to maintain consistent statistics across devices.

\paragraph{Data Loading:}
Data loading is optimized for each modality. Volumetric data from UK Biobank and A4 is loaded using SimpleITK with per-volume normalization. HCP time series data is processed in chunks with per-ROI standardization. RSNA and RHPE images are loaded via PIL with intensity scaling to the range [0,1]. Echo-Net data loading implements sequential frame extraction for video processing. Our data pipeline utilizes prefetching with 4 worker threads and memory pinning for optimal GPU transfer speeds.

\paragraph{Hardware and Software:}
All experiments were conducted on NVIDIA A100 GPUs with 40GB memory, using CUDA 11.7 and PyTorch 1.13.1 on Python 3.8. The training pipeline implements mixed precision training through torch.cuda.amp and distributed training via DistributedDataParallel with gradient synchronization occurring every step. 

\paragraph{Model Checkpointing:}
Model checkpointing saves the best performing models based on validation MAE, with checkpoints recorded every 10 epochs. Each checkpoint contains the complete model state dictionary, optimizer state, current epoch number, and best achieved metrics. We implement exponential moving average (EMA) model averaging with a momentum value of 0.999. The automatic mixed precision state is preserved in checkpoints to ensure training continuity.

For specific implementation details not covered in this documentation, please refer to our released codebase.

\section{Additional Results and Analysis}
\label{D}

\subsection{Generalization on missing targets}
\label{Generalization}
Regression datasets often suffer from "missing targets", where samples with certain target values are absent in the training set. To explore performance in this scenario, we curate subsets of the UK Biobank dataset by removing samples within specific age ranges (45-50, 60-65, and 75-80 years) while maintaining the original validation and test sets.

Table \ref{table15} illustrates that SupReMix significantly outperforms the Vanilla approach overall, improving MAE by 6.2\%. More remarkably, it boosts performance under missing targets setting by 24.5\%. This improvement stems from SupReMix's ability to learn more continuous representations, leveraging mixtures as effective landmarks for learning representations of missing targets, thus enhancing prediction accuracy with unseen data.

\begin{table}[h]
    \centering
        \caption{Evaluation on UK Biobank with missing targets (MT)}
        \begin{tabular}{lccccc}
        \toprule & \multicolumn{2}{c}{ MAE $\downarrow$} & & \multicolumn{2}{c}{ GM $\downarrow$} \\
        \cline { 2 - 3 } \cline { 5 - 6 } & Overall & MT & & Overall & MT \\
        \hline
        Vanilla & 3.12 & 4.45 & & 14.89 & 19.23 \\
        SupCon \citep{khosla2020supervised} & 3.08 & 4.12 & & 14.52 & 17.56 \\
        RNC \citep{zha2023rank} & 3.15 & 4.28 & & 14.92 & 18.12 \\
        SupReMix & \textbf{2.93} & \textbf{3.36} & & \textbf{13.85} & \textbf{15.23} \\
        \hline 
        GAINS (\textbf{Ours} VS. Vanilla(\%)) & \textbf{\textcolor{darkergreen}{+6.2}} & \textbf{\textcolor{darkergreen}{+24.5}} & & \textbf{\textcolor{darkergreen}{+7.0}} & \textbf{\textcolor{darkergreen}{+20.8}} \\
        \bottomrule
        \end{tabular}
    \label{table15}
\end{table}

\subsection{Varying batch size}
\label{varybatch}
In contrastive learning, pioneered by studies such as \citep{chen2020simple} and supervised contrastive learning \citep{khosla2020supervised}, large batch sizes have been a common strategy to maintain extensive negative sample pools. We investigate batch size impact across medical imaging modalities using the UK Biobank and Echo-Net datasets. Results with batch sizes from 32 to 512 show that, contrary to classification tasks, increasing batch size does not consistently improve performance for regression tasks.

\begin{table}[h]
    \centering
        \caption{Effect of varying batch size on UK Biobank and Echo-Net datasets}
        \begin{tabular}{cccccc}
        \toprule & \multicolumn{2}{c}{ MAE $\downarrow$} & & \multicolumn{2}{c}{ GM $\downarrow$}  \\  
        \cline { 2 - 3 } \cline { 5 - 6 } \vspace{-0.3cm} \\ 
        Batch Size & UK Biobank & Echo-Net & & UK Biobank & Echo-Net \\
        \hline
        32  & 3.02 & 4.12 & & 14.52 & 20.65 \\
        64 & 2.97 & 4.02 & & 14.37 & 19.79 \\
        128 & 2.99 & 4.08 & & 14.45 & 20.12 \\
        256 & 3.05 & 4.15 & & 14.62 & 20.88 \\
        512 & 3.08 & 4.18 & & 14.75 & 21.05 \\
        \bottomrule
        \end{tabular}
    \label{table14}
\end{table}

\subsection{Imbalanced Regression}
\label{imbalance}
Medical imaging datasets often exhibit natural imbalances in target value distributions. As demonstrated in Table \ref{table7}, our SupReMix method substantially improves performance on imbalanced regression, working synergistically with established solutions like LDS and FDS \citep{pmlr-v139-yang21m}. Using the RSNA bone age dataset, which shows natural age distribution imbalances, our method improves overall MAE by approximately \textbf{15\%} and enhances few-shot performance by \textbf{30\%} compared to vanilla regression.

\begin{table}[h!]
    \centering
    \footnotesize
    \setlength{\tabcolsep}{2.3pt}
    \caption{Evaluation on imbalanced regression using RSNA dataset. Many: many-shot region (bins with $>$100 training samples), Few: few-shot region (bins with $<$20 training samples).}
    \begin{tabular}{lcccc|cccc}
    \toprule 
    Metrics & \multicolumn{4}{c}{ MAE \(\downarrow\)} & \multicolumn{4}{c}{GM \( \downarrow\)} \\
    Shot & All & Many & Med & Few & All & Many & Med & Few \\
    \hline 
    Vanilla & 4.55 & 4.12 & 5.23 & 7.85 & 45.8 & 38.2 & 52.4 & 72.8 \\
    LDS + FDS & 4.32 & 4.08 & 4.89 & 6.92 & 43.2 & 37.8 & 48.5 & 65.4 \\
    \textbf{SupReMix} + LDS & 3.96 & \textbf{3.85} & 4.52 & 6.45 & 39.8 & \textbf{35.2} & 45.8 & 62.3 \\
    \textbf{SupReMix} + FDS & 3.92 & 3.88 & 4.48 & \textbf{6.12} & 39.5 & 35.6 & 44.9 & 60.8 \\
    \textbf{SupReMix} + LDS + FDS & \textbf{3.87} & 3.86 & \textbf{4.45} & 6.18 & \textbf{38.9} & 35.4 & \textbf{44.5} & \textbf{60.2} \\
    \midrule
    GAINS (VS. Vanilla (\%)) & \textbf{\textcolor{darkergreen}{+14.9}} & \textbf{\textcolor{darkergreen}{+6.3}} & \textbf{\textcolor{darkergreen}{+14.9}} & \textbf{\textcolor{darkergreen}{+21.3}} & \textbf{\textcolor{darkergreen}{+15.1}} & \textbf{\textcolor{darkergreen}{+7.9}} & \textbf{\textcolor{darkergreen}{+15.1}} & \textbf{\textcolor{darkergreen}{+17.3}} \\
    \bottomrule
    \end{tabular}
    \label{table7}
\end{table}

\subsection{``Discretization" alternative for supervised contrastive regression}
\label{D4}
One common strategy for tackling regression using classification techniques is discretization \citep{niu2016ordinal,zhang2022improving}. We examine this approach using the Echo-Net dataset, where ejection fraction values are continuous. We vary bin sizes from 1 (original values) to 20 during the contrastive learning stage. Table \ref{table16} shows performance degradation with increased bin sizes, suggesting that discretization hampers the natural continuity of regression data.

\begin{table}[h]
    \centering
        \caption{Impact of bin size variation on Echo-Net performance}
        \begin{tabular}{ccc}
        \toprule
        Bin Size & MAE $\downarrow$ & GM $\downarrow$\\
        \hline
        1 & 4.02 & 19.79 \\
        5 & 4.18 & 20.45 \\
        10 & 4.35 & 21.88 \\
        20 & 4.62 & 23.54 \\
        \bottomrule
        \end{tabular}
    \label{table16}
\end{table}

\subsection{Comparison with C-Mixup}
We perform additional comparisons between SupReMix and C-Mixup \citep{yao2022c} across our medical imaging datasets. Results (Figure \ref{tab:mixup_comparison}) show SupReMix consistently outperforms C-Mixup trained from scratch, while combining SupReMix pretraining with C-Mixup yields further improvements.

\begin{table}[h]
    \centering
    \caption{Performance comparison of SupReMix with C-Mixup across medical datasets (MAE${\downarrow}$)}
    \begin{tabular}{lcccc}
        \toprule
        Method & UK Biobank & HCP & Echo-Net & RSNA \\
        \midrule
        C-Mixup & 3.05 & 5.92 & 4.15 & 4.28 \\
        \textbf{SupReMix} & \textbf{2.97} & 5.80 & 4.02 & 4.20 \\
        \midrule
        \textbf{SupReMix} + C-Mixup & 2.95 & \textbf{5.75} & \textbf{3.98} & \textbf{4.15} \\
        \rowcolor{white} GAINS (\textbf{Joint} VS. C-Mixup(\%)) & \textbf{\textcolor{darkergreen}{+3.3}} & \textbf{\textcolor{darkergreen}{+2.9}} & \textbf{\textcolor{darkergreen}{+4.1}} & \textbf{\textcolor{darkergreen}{+3.0}} \\
        \bottomrule
    \end{tabular}
    \label{tab:mixup_comparison}
\end{table}

\subsection{Pair Selection Strategy}
We explore different pair selection strategies using the UK Biobank dataset. Results (Figure \ref{tab:pair_selection})  show that input-level mixup significantly reduces SupReMix effectiveness, while our embedding-level approach outperforms C-Mixup.

\begin{table}[htbp]
    \centering
    \caption{Comparison of pair selection strategies on UK Biobank}
    \begin{tabular}{lccc}
        \toprule
        Strategy & MAE ${\downarrow}$ & MSE ${\downarrow}$ & GM ${\downarrow}$\\
        \midrule
        Input Mixup \citep{zhang2018mixup} & 3.25 & 15.88 & 15.23 \\
        C-Mixup \citep{yao2022c} & 3.05 & 14.82 & 14.65 \\
        SupReMix & \textbf{2.97} & \textbf{14.37} & \textbf{14.37} \\
        \bottomrule
    \end{tabular}
    \label{tab:pair_selection}
\end{table}

\subsection{Comparison with generative pre-training baselines}
We compare SupReMix against state-of-the-art generative pre-training methods on medical imaging tasks. Results (Figure \ref{tab:generative_comparison}) demonstrate the consistent superiority of our approach across modalities.

\begin{table}[htbp]
    \centering
    \caption{Comparison with generative pre-training on UK Biobank}
    \begin{tabular}{lccc}
        \toprule
        Method & MAE ${\downarrow}$ & MSE ${\downarrow}$ & GM ${\downarrow}$\\
        \midrule
        MAE (ViT-Base) \citep{he2022masked} & 3.28 & 15.92 & 15.45 \\
        MAE (ViT-Large) \citep{he2022masked} & 3.15 & 15.45 & 15.12 \\
        SupReMix & \textbf{2.97} & \textbf{14.37} & \textbf{14.37} \\
        \bottomrule
    \end{tabular}
    \label{tab:generative_comparison}
\end{table}

\newpage
\subsection{Other metrics }
In addition to Mean Absolute Error (MAE) reported in the main text, we evaluate model performance using several complementary metrics:


\begin{figure}[htbp]
    \centering
    \includegraphics[width=\columnwidth]{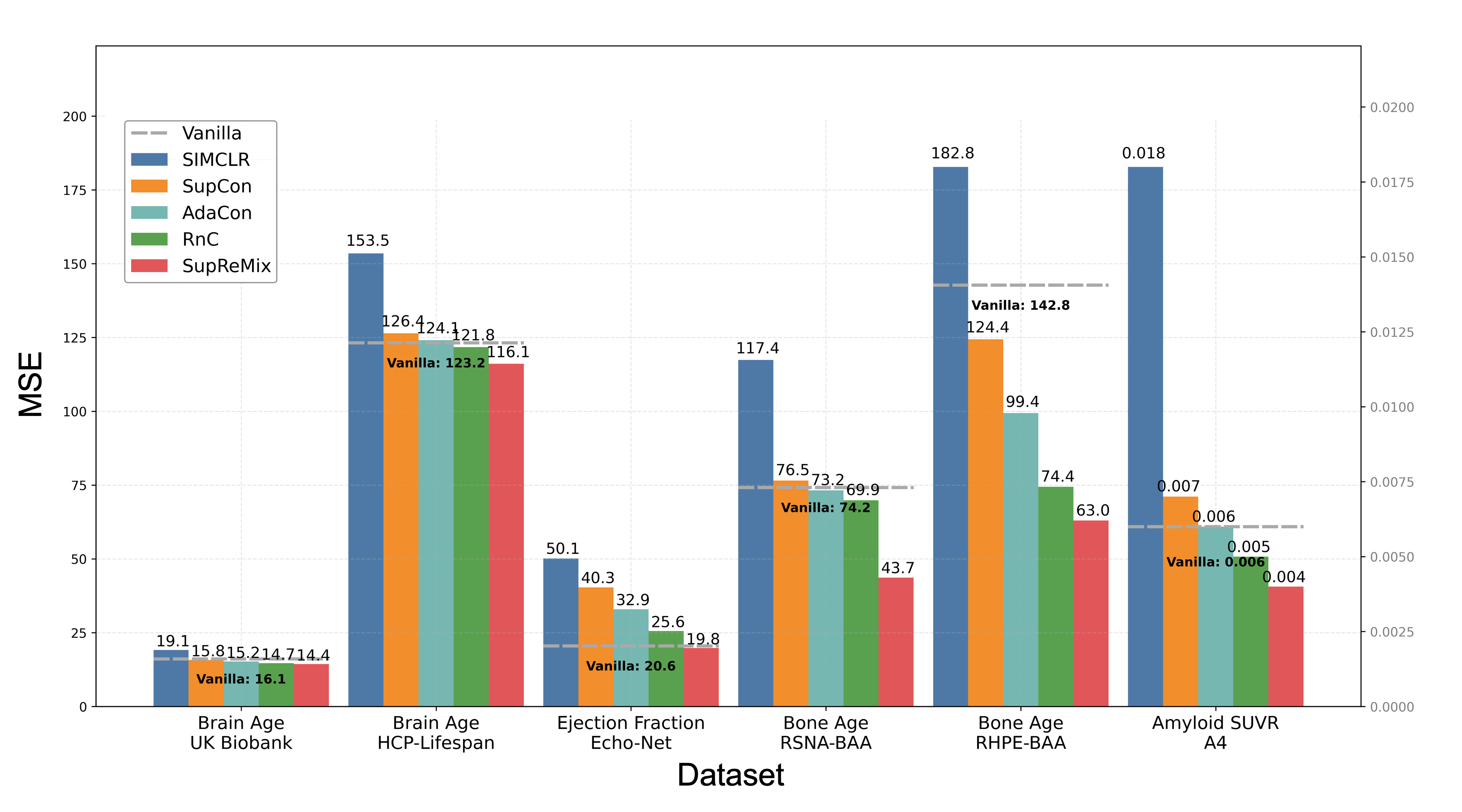}
    \caption{
    \textbf{Mean Squared Error (MSE) Comparisons Across Datasets.}}
    \label{fig:mse_comparision}
\end{figure}

\begin{figure}[htbp]
    \centering
    \includegraphics[width=\columnwidth]{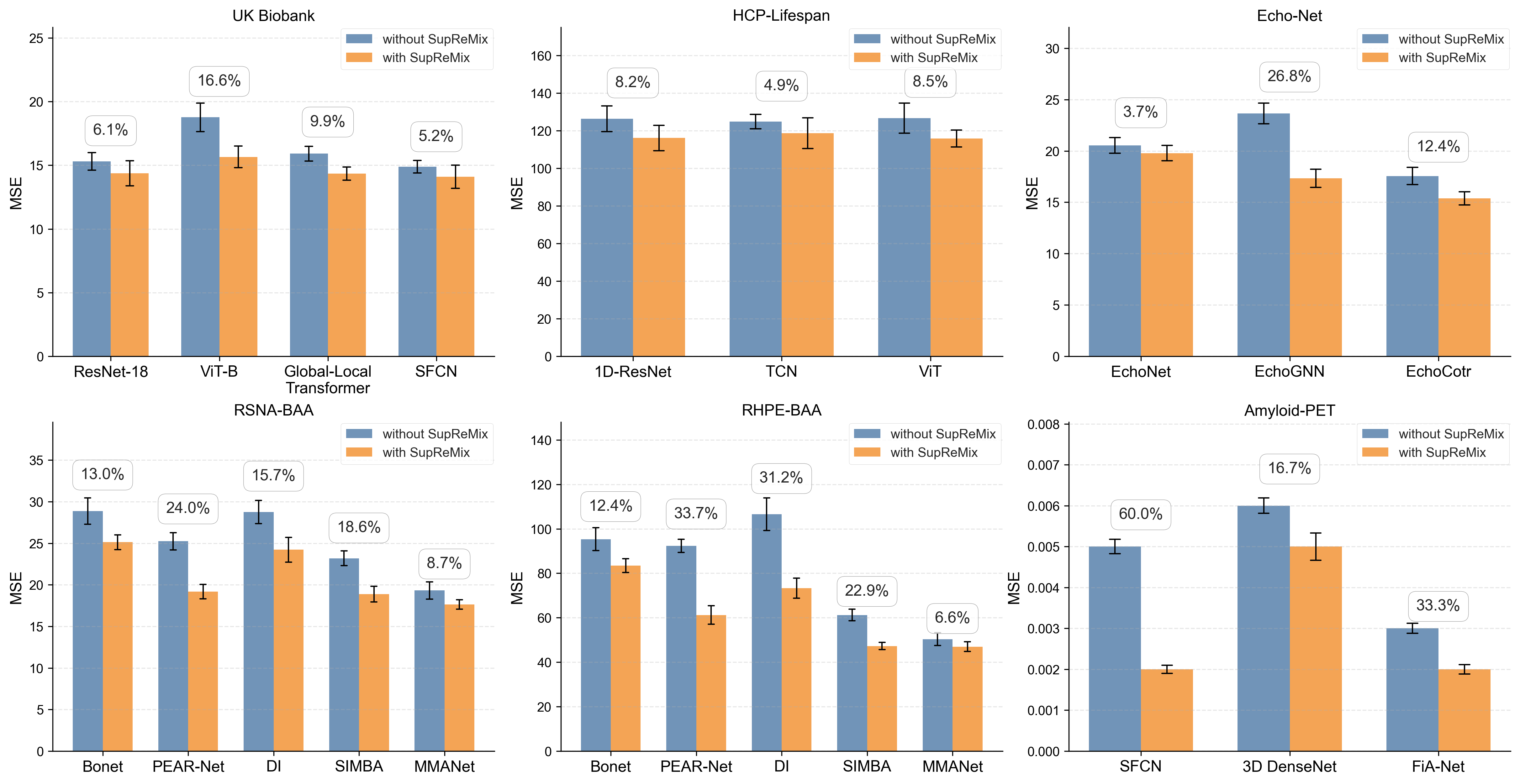}
    \caption{
    \textbf{Mean Squared Error (MSE) comparisons between task-specific methods with and without SupReMix pretraining.}}
    \label{fig:multi_panel_mse_comparison}
\end{figure}


\begin{figure}[htbp]
    \centering
    \includegraphics[width=\columnwidth]{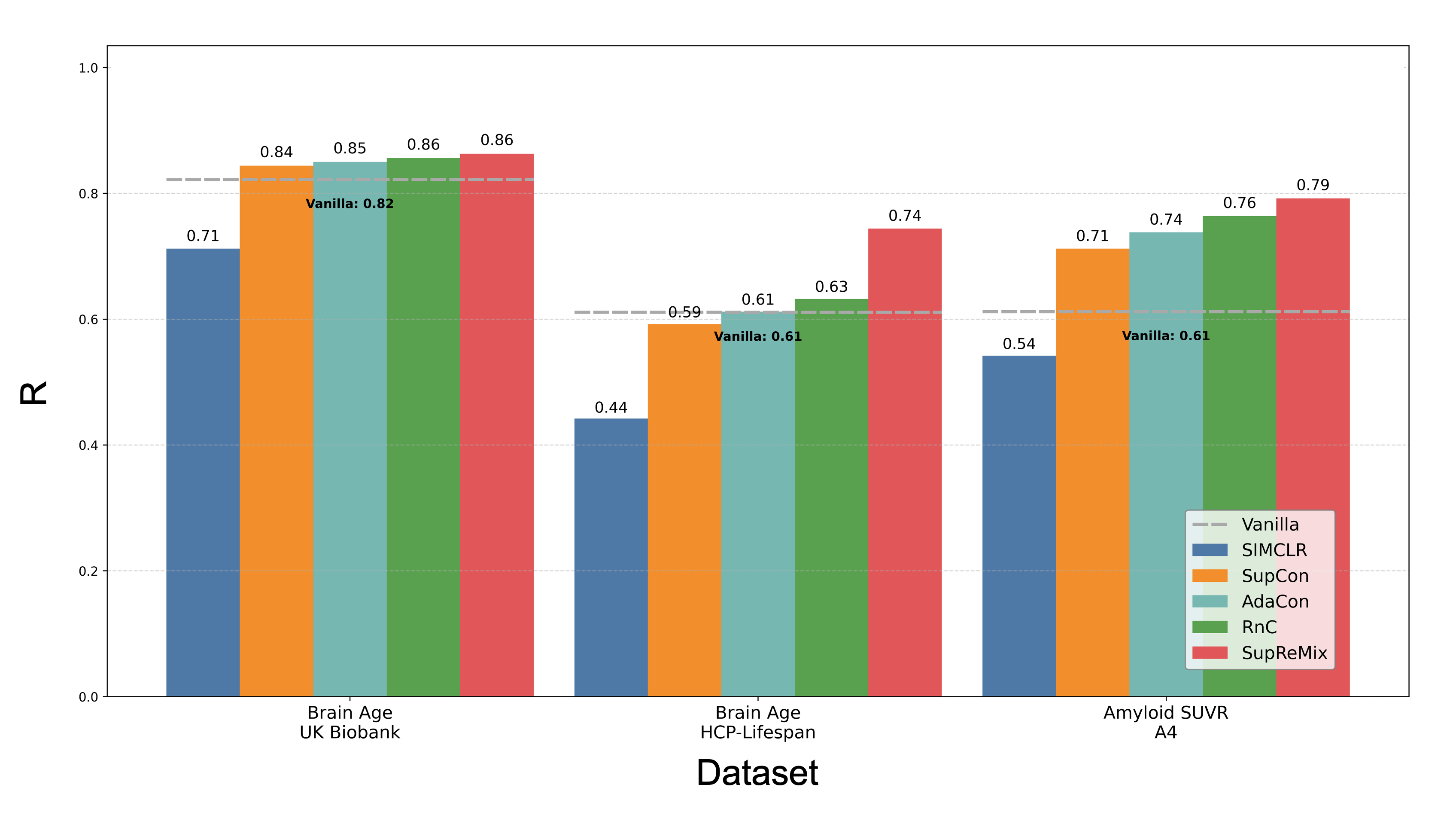}
    \caption{
    \textbf{Pearson Correlation (R) Comparisons Across Datasets.}}
    \label{fig:mse_comparision}
\end{figure}

\begin{figure}[htbp]
    \centering
    \includegraphics[width=\columnwidth]{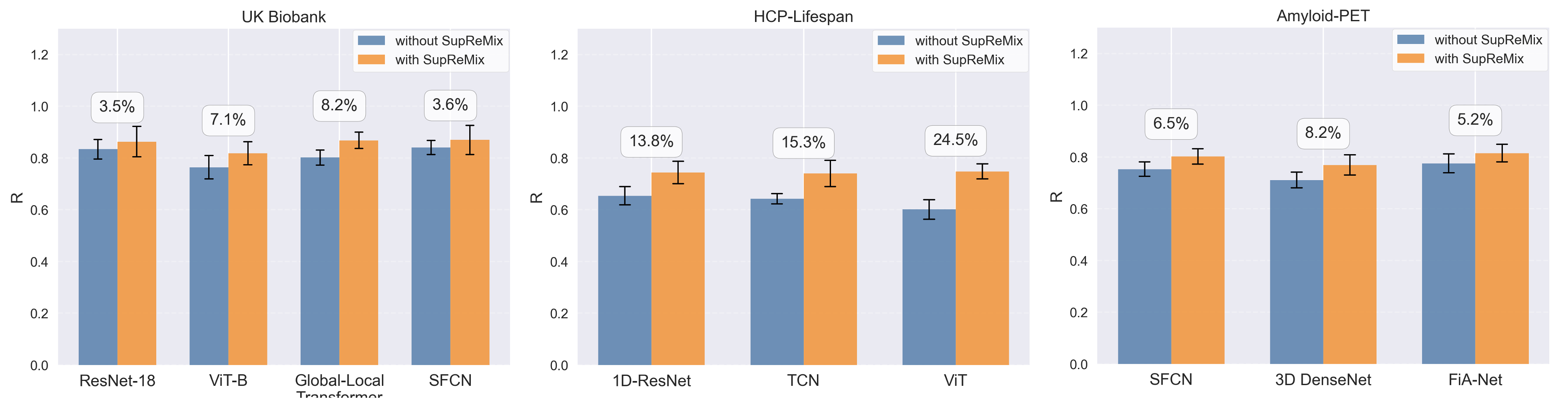}
    \caption{
    \textbf{Pearson Correlation (R) comparisons between task-specific methods with and without SupReMix pretraining.}}
    \label{fig:multi_panel_r_comparison}
\end{figure}


\begin{figure}[htbp]
    \centering
    \includegraphics[width=\columnwidth]{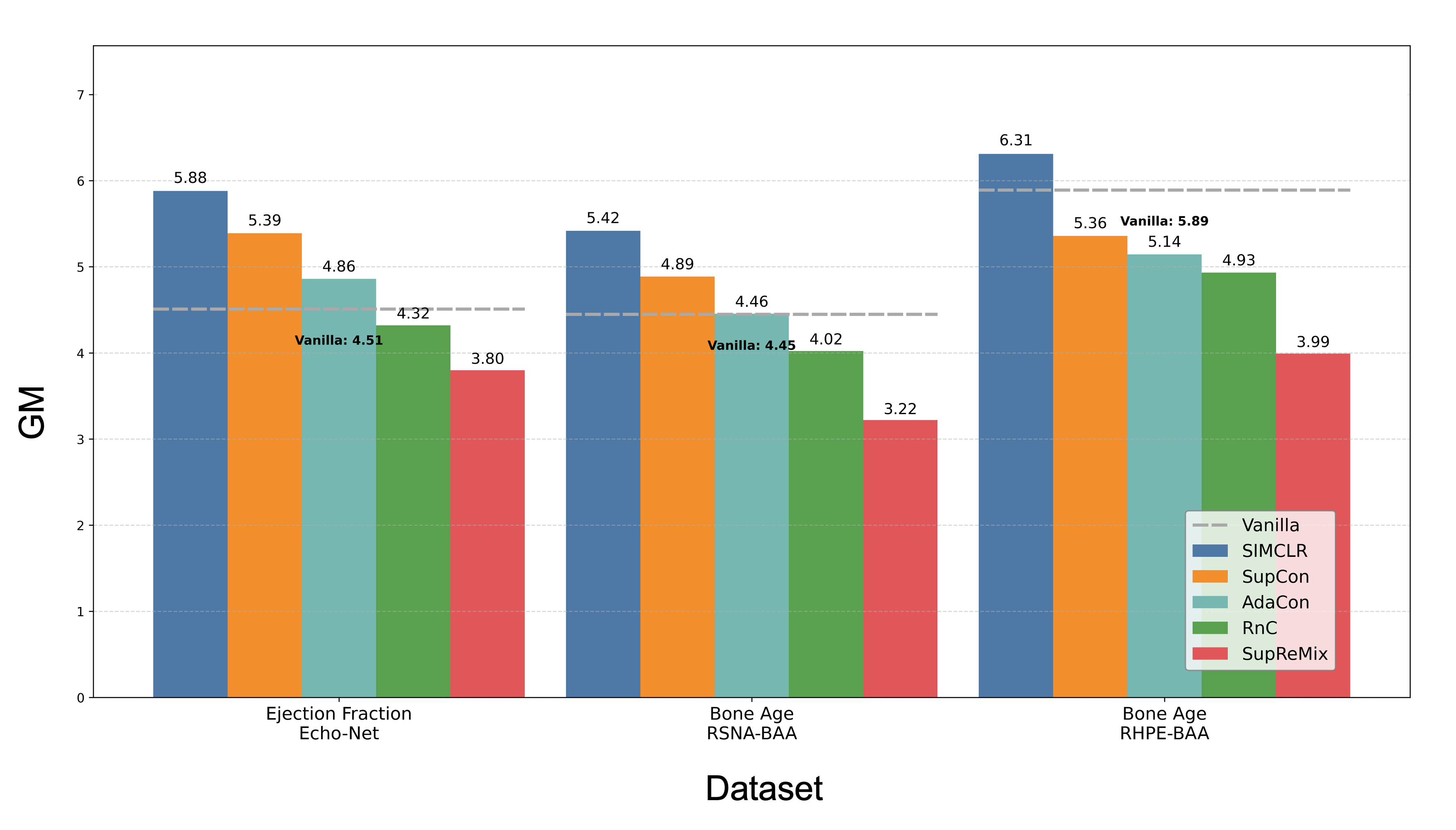}
    \caption{
    \textbf{Geometric Mean Error (GM) Comparisons Across Datasets.}}
    \label{fig:mse_comparision}
\end{figure}

\begin{figure}[htbp]
    \centering
    \includegraphics[width=\columnwidth]{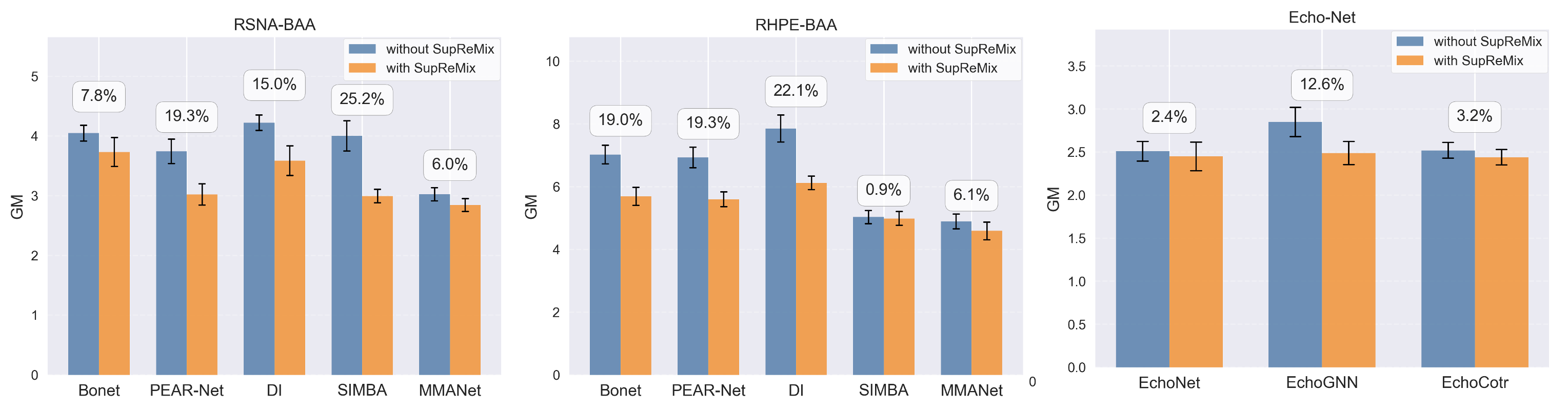}
    \caption{
    \textbf{Geometric Mean Error (GM) comparisons between task-specific methods with and without SupReMix pretraining.}}
    \label{fig:multi_panel_gm_comparison}
\end{figure}
\end{document}